\newcommand{\cs}[1]{{\leavevmode\color{black}{#1}}}
\newcommand{\hlr}[1] {{\color{red} #1}}
\newcommand{\PE}{\mathrm{PE}}
\newcommand{\APE}{\mathrm{APE}}
\newcommand{\SPE}{\mathrm{SPE}}
\newcommand{\vx}{\mathbf{x}}
\theoremstyle{plain}
\newtheorem{theorem}{Theorem}[section]
\theoremstyle{definition}
\theoremstyle{remark}
\newtheorem{remark}[theorem]{Remark}
\icmltitlerunning{Learning High-Frequency Functions Made Easy with Sinusoidal Positional Encoding}
\begin{document}

\twocolumn[
\icmltitle{Learning High-Frequency Functions Made Easy \\ with Sinusoidal Positional Encoding}




\begin{icmlauthorlist}
\icmlauthor{Chuanhao Sun}{yyy}
\icmlauthor{Zhihang Yuan}{yyy}
\icmlauthor{Kai Xu}{comp}
\icmlauthor{Luo Mai}{yyy}
\icmlauthor{N. Siddharth}{yyy,aaa}
\icmlauthor{Shuo Chen}{yyy}
\icmlauthor{Mahesh K. Marina}{yyy}
\end{icmlauthorlist}

\icmlaffiliation{yyy}{The University of Edinburgh, Edinburgh, UK}
\icmlaffiliation{comp}{MIT-IBM Watson AI Lab, Cambridge, MA, US}
\icmlaffiliation{aaa}{The Alan Turing Institute, UK}

\icmlcorrespondingauthor{Chuanhao Sun}{chuanhao.sun@ed.ac.uk}
\icmlcorrespondingauthor{Mahesh K. Marina}{mahesh@ed.ac.uk}

\icmlkeywords{Machine Learning, ICML}

\vskip 0.3in
]



\printAffiliationsAndNotice{}  

\begin{abstract}
Fourier features based positional encoding (PE) is commonly used in machine learning tasks that involve learning high-frequency features from low-dimensional inputs, such as 3D view synthesis and time series regression with neural tangent kernels. Despite their effectiveness, existing PEs require manual, empirical adjustment of crucial hyperparameters, specifically the Fourier features, tailored to each unique task. Further, PEs face challenges in efficiently learning high-frequency functions, particularly in tasks with limited data. In this paper, we introduce sinusoidal PE (SPE), designed to efficiently learn adaptive frequency features closely aligned with the true underlying function. Our experiments demonstrate that SPE, without hyperparameter tuning,  consistently achieves enhanced fidelity and faster training across various tasks, including 3D view synthesis, Text-to-Speech generation, and 1D regression. SPE is implemented as a direct replacement for existing PEs. Its plug-and-play nature lets numerous tasks easily adopt and benefit from SPE. 

\noindent \textbf{Code}: \url{github.com/zhyuan11/SPE}
\end{abstract}
\vspace{-2em}

\section{Introduction}\label{intro}
Fully connected neural networks, a.k.a~multilayer perceptrons or MLPs, are trained to generate representations of high-dimensional data such as shapes, images, and signed distances, by processing low-dimensional coordinates.
Recent works have shown that the Fourier series regression using MLP~\citep{tancik2020fourier} can enable neural networks to learn high-frequency functions in low-dimensional spaces. In neural radiance fields (NeRFs) and its follow-up studies~\citep{tancik2020fourier,mildenhall2021nerf}, Fourier features, induced by positional encoding (PE), are applied to learn 3D representations of objects or scenes by taking in 1D sequences that represent samples of light.

Despite its effectiveness, a successful application of PE is non-trivial. 
A series of studies emerge to resolve the practical issues of PE on its sensitivity to hyper-parameters~\citep{gao2023adaptive}, difficulty to capture high-frequency components during training~\citep{yang2023freenerf}, etc. Those practical issues also block a wider application of PE in emerging generative AI tasks, especially those involving complex high-frequency details. 
For example, our experiment shows that a direct application of PE in speech synthesis does not offer any benefit in capturing high-frequency details~\citep{ren2019fastspeech, ren2020fastspeech}, though ``by design'' it should do.

In this study, we delve into the challenges of training neural networks for machine learning tasks that demand the retention of high-frequency components in their outputs. 
Our exploration into the quantity and training dynamics of frequency components within PE has led us to identify two primary factors that can undermine the effectiveness of PE: (1) the difficulty in configuring stationary frequency features without adequate prior knowledge, which can result in PE failing to learn if configurations are incorrect, and (2) the detrimental impact on performance caused by compelling the model to perfectly align with the specific frequency components of the training set (e.g., the case of overfitting PE for an original NeRF).

To overcome the challenges identified, we seek to develop a new PE that can effectively learn the appropriate number of components and their frequencies \emph{conditionally on the inputs}. Our development has led to sinusoidal positional encoding (SPE) which augments PE \citep{tancik2020fourier} with periodic activation functions~\citep{sitzmann2020implicit}, thus making the number and frequencies of Fourier series trainable and adaptive to inputs.
Although we initially propose SPE for challenging few-view NeRF tasks, we find it is a generic method that can benefit a diverse set of tasks which need modelling complex high-frequency features.
With SPE, we have a simple yet effective form of PE for the first time that can effectively work in a wide range of tasks without manually tuning the numbers as well as the values of the Fourier features. 

We have evaluated SPE against a wide range of baseline methods in various generation tasks. In the task of few-view NeRF, we achieve a significant gain in synthesizing high-frequency details with limited views by replacing PE with SPE. In the task of text-to-speech generation, we achieve a significant performance gain on a state-of-the-art (SOTA) model: FastSpeech~\citep{ren2019fastspeech} with \emph{a single-line change} of codes. In the task of 1D regression with neural tangent kernel (NTK) \citep{jacot2018neural}, we significantly enhance both the fidelity and convergence speed by simply replacing the PE with SPE, which also avoids the need for expensive hyperparameter tuning.



\section{Background and Motivation}\label{statement}
PE is designed for learning high-frequency functions in machine learning tasks such as NeRF~\citep{tancik2020fourier, mildenhall2021nerf}, 1D or 2D regression~\citep{tancik2020fourier, nguyen2015deep}, 3D shape regression~\citep{mescheder2019occupancy} and audio generation~\citep{ren2019fastspeech, ren2020fastspeech}. 
It refers to the expansion of input $\mathbf{x}$ with sinusoidal function pairs $(\sin(\alpha_i\mathbf{x}),\cos(\alpha_i\mathbf{x}))$, $i\in \{0,\dots ,L-1\}$ where $L$ is a hyper-parameter determining the number of frequency components, also known as a special case Fourier features in \citep{rahimi2007random}.
In PE, the Fourier features $\alpha_i$ are predefined per scenario empirically. 
A typical default setup in NeRF-related tasks~\citep{mildenhall2021nerf, yang2023freenerf, tancik2023nerfstudio} with $\alpha_i = 2^{i-1}\pi$ is illustrated in Equation~\ref{eq:nerf-pe}:
\begin{equation}\label{eq:nerf-pe}
\begin{aligned}
    \mathrm{PE}_L(\mathbf{x}) =
    [
    \sin(\pi\mathbf{x}),
    &\cos(\pi\mathbf{x}),
    \cdots,\\
    &\sin(2^{L-1}\pi\mathbf{x}),
    \cos(2^{L-1}\pi\mathbf{x})
    ]^\top
\end{aligned}
\end{equation}

However, such a formulation has obvious limitations.
The optimal setting of $L$ should be conditional on two factors: (i) the task setup and (ii) the number of parameters in an MLP. While the second factor is a natural concern, the task-specific setup makes an optimal $L$ difficult to set. We illustrate how the optimal setting of $L$ will affect the effectiveness of a learnt high-frequency function in the context of a NeRF task in Figure~\ref{fig:pecompare}. 
\begin{figure}[t]
\centering
\includegraphics[width=0.49\textwidth]{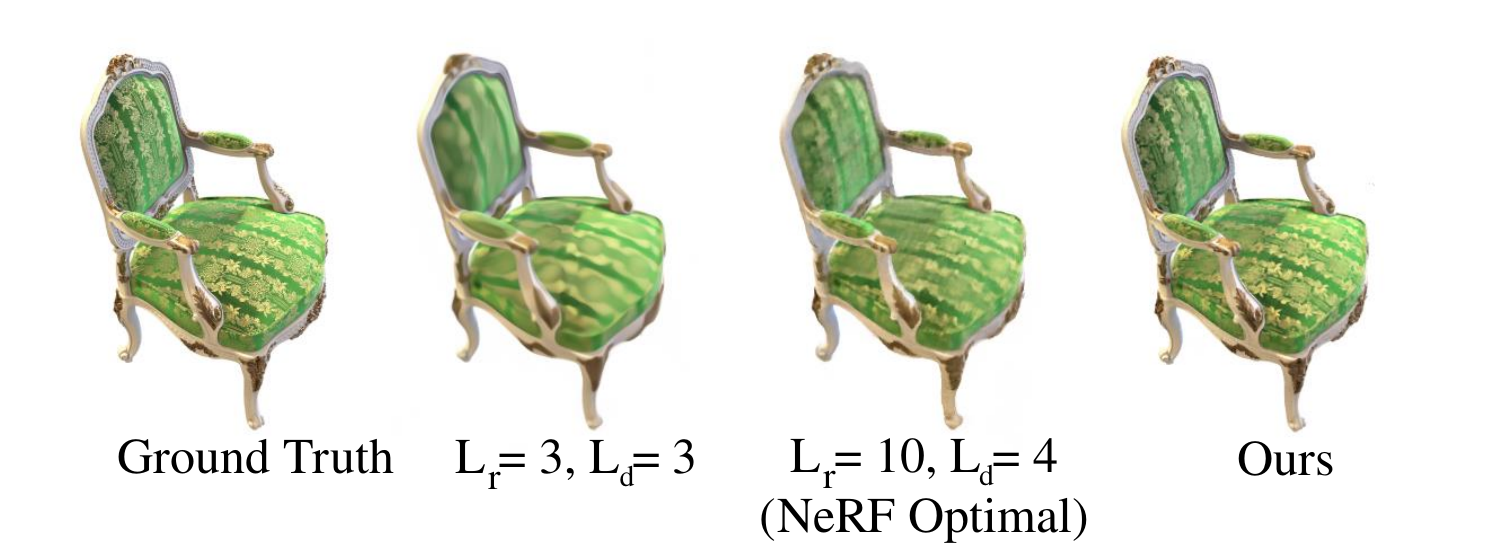} 
\vspace{-1em}
\caption{\cs{New view generation in NeRF with 8 input views on Blender dataset~\citep{mildenhall2021nerf}. $L_r$ is the number of components taken when processing coordinates in PE and $L_d$ for the direction processing in PE.}\looseness=-1}
\label{fig:pecompare}
\vspace{-1em}
\end{figure}
We find that when setting $L$ to 10 and 4 for RGB and density threads respectively, it achieves significantly better performance than using $L=3$ for both threads.
Nevertheless, our method, SPE, can improve upon it, as shown in the last column of the same figure. 


\cs{In \citep{rahimi2007random, tancik2020fourier}, another variation of PE is discussed as well (called Gaussian Random Fourier Features or GRFF), where the Fourier features $\alpha_i$ is defined as a pseudo random sequence sampled from a Gaussian distribution $\mathcal{N}(0,\sigma^2)$, with a form
\begin{equation}\label{eq:nerf-grff}
\begin{aligned}
    \mathrm{GRFF}_L(\mathbf{x}) =
    [
    \sin(\mathbf{B}\mathbf{x}),
    \cos(\mathbf{B}\mathbf{x})
    ]^\top
\end{aligned},
\end{equation}

where $\mathbf{B}\in \mathbb{R}^{L\times d}$, $d$ is the input sequence length. Although sample from a Gaussian distribution, the GRFF still uses a stationary encoding methodology. In practice, GRFF only brings negligible gain on tasks such as NeRF, and so far PE is the common option. For the clarity and convenience of analysis, we focus the discussion with PE in this paper, and the same conclusion can be applied to GRFF as well. More detailed evaluation and discussion about GRFF can be found in \S\ref{eval_grff}.}


To optimize the hyper-parameters of PE, existing approaches follow two ways: (i) empirically tune the parameters of stationary Fourier features for PE, or (ii) make the parameters associated with Fourier features trainable.

\subsection{Empirically optimized stationary Fourier features}
\cs{
NeRF, our major use case, is a technology to generate new views based on existing views of the same object or scene. Few 2D images representing available views along with the 3D coordinates of image pixels and direction of viewpoint make up the input to the network. The network is then trained to generate the density (how much light is blocked or absorbed at that point) and RGB colors from new input viewpoints. Basically, a MLP is used in NeRF with two different input threads: 3D coordinates and direction of viewpoint, where the two threads are processed with different feature length $L$ of PE for better representing high frequency details. The choice of $L$ significantly influences the performance of NeRF.
}

Practitioners often empirically choose stationary $L$ based on the specific task setup. 
We show this empirical approach is problematic using concrete examples. 
Specifically, we first assess a few-view NeRF model~\citep{mildenhall2021nerf} with varied $L$ for different objects on the Blender dataset (used in~\citep{sitzmann2019deepvoxels, mildenhall2021nerf}). 
Additionally, we repeat the same experiments for a SOTA speech synthesis model, named FastSpeech~\citep{ren2019fastspeech}.

For the few-view NeRF , we summarize the results in Table~\ref{table:NeRFTest}. 
For different objects to generate, different settings for $L$ bring about $2\text{dB}$ variation on Peak Signal-to-Noise Ratio (PSNR). 
\begin{table}[t]
\centering
\resizebox{\columnwidth}{!}{%
\setlength{\tabcolsep}{4pt}
\begin{small}
\begin{tabular}{lccccccccc}
\toprule
 PSNR$\uparrow$&Chair  &Drums  &Ficus  &Hotdog  &Lego  &Materials  &Mic  & Ship & Average  \\ \midrule
L=5 &32.19  &\textbf{\textit{25.29}}  &\textbf{\textit{30.73}}  &36.06  &30.77  &\textbf{\textit{29.77}}  &31.66  &28.26 &30.59 \\ 
L=10 &\textbf{\textit{33.00}}  &25.01  &30.13  &\textbf{\textit{36.18}}  & \textbf{\textit{32.54}}  &29.62  &\textbf{\textit{32.91}}  &\textbf{\textit{28.65}}  & \textbf{\textit{\hlr{31.00}}} \\ 
L=15 &32.87  & 24.65 &29.92  &35.78  &32.50  &29.54  &32.86  &28.34 &30.80\\ \hline 
Best &33.0  & 25.29 &30.73  &36.18  &32.54  &29.77  &32.91  &28.65 &31.13\\ \bottomrule
\end{tabular}
\end{small}
\setlength{\tabcolsep}{6pt}
}
\caption{NeRF with different settings for $L$ in PE.}
\label{table:NeRFTest}
\vspace{-1em}
\end{table}
However, none of the settings for $L$ can achieve the best performance on all objects. Overall, we find that $L=10$ yields the best average performance on this task.

We apply the best setting of $L=10$ from above to another task: Fastspeech.
We apply PE to the MLP layer at the end of the text-to-speech transformer. 
Here we would expect this modification with PE can bring more details since, if we use MLP without PE, the output audio loses many details with a blurring spectrum.
According to Figure~\ref{fig:pespeech}, with all $L\in \{5, 10, 15\}$, adding PE in form as Equation~\ref{eq:nerf-pe} does not change the result, supporting our claim that the stationary $L$ which shows best performance in a task cannot be applied to other tasks. 
\begin{figure}[t]
\centering
\includegraphics[width=0.45\textwidth]{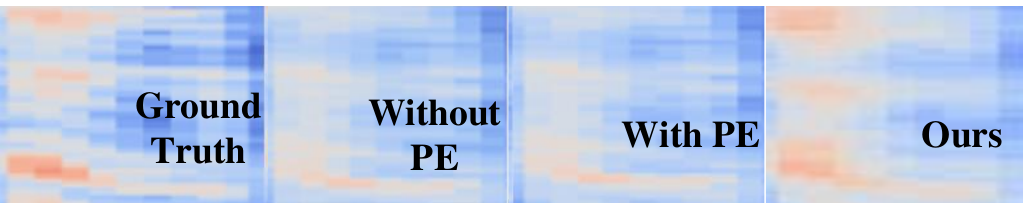} 
\vspace{-1em}
\caption{The Optimal PE for NeRF on Blender dataset only has negligible influence on speech generation with FastSpeech, while our method achieves better alignment of
the red regions with the ground truth.}
\label{fig:pespeech}
\vspace{-1em}
\end{figure}

It is worth noting that for both NeRF and speech synthesis tasks, SPE achieves the best performance, even better than all methods that rely on extensive manual tuning.


\subsection{Adaptive Fourier features \& probabilistic encoding}


To address the issues with stationary Fourier features, adaptive positional encoding~\citep{gao2023adaptive} has been proposed to make the Fourier features trainable. More formally, a trainable Fourier feature $\APE(\cdot)$ is defined as
\begin{equation}\label{eq:ape}
\begin{aligned}
    \APE_{\boldsymbol{\mathbf{W}_{\text{SPE}}}}(\mathbf{x}) = 
    [
    \sin(\omega_0\mathbf{x}),
    &\cos(\omega_0\mathbf{x}),
    \cdots,\\
    &\sin(\omega_K\mathbf{x}),
    \cos(\omega_K\mathbf{x})\
    ]^\top
    \end{aligned}
\end{equation}
where $\mathbf{W}_{\text{SPE}} = [\omega_1, \dots, \omega_K]$ is the trainable frequency features and $K$ is the number of possible features. 
However, training $\omega$ from data is still challenging. For an MLP, the input $\mathbf{x}$ is a 1D sequence, whereas the output matrix $[\omega_i]$  of $\omega$ is a $N\times K$ matrix, $N$ is the length of $\mathbf{x}$. The learning task for training $[\omega_i]$ is a non-trivial 1D-to-2D problem that requires learning high dimensional functions in unbounded space (\S\ref{sec:altlearn}), complicating the training of $[\omega_i]$. From Figure \ref{fig:instantnerf}, we can observe that while APENeRF can produce a reasonable outline of the chair, the detailed patterns on the chair appear blurred. In contrast, our method (i.e., SPE) mitigates this blurriness, generating more accurate flower patterns.


Apart from APE, probabilistic encoding methods, such as hash encoding~\citep{muller2019neural, muller2022instant, tancik2023nerfstudio} are also proposed to learn Fourier features based on inputs.
These methods, though adaptive, exhibit a high dependency on the amount of available training data. Without sufficient data, an effective encoding is often difficult to find (which is detailed in \S\ref{hashreasoning} and experimentally shown in \S\ref{subsec:nerf}).
Essentially, the encoding part of these works performs a Monte Carlo search and the rest of the network need to take care of learning the composition of those encodings and handle Hash collision, which requires many more samples to capture sufficient statistics. If we train a NeRF with hash encoding with a limited number of views (denoted as InstantNeRF---the SOTA NeRF which adopts hash encoding), the fidelity is poor, as Figure~\ref{fig:instantnerf} shows.
 In contrast, SPE generates a high-quality object close to the ground truth, achieving noticeably enhanced performance over InstantNeRF.

\begin{figure}[t]
\centering
\includegraphics[width=0.49\textwidth]{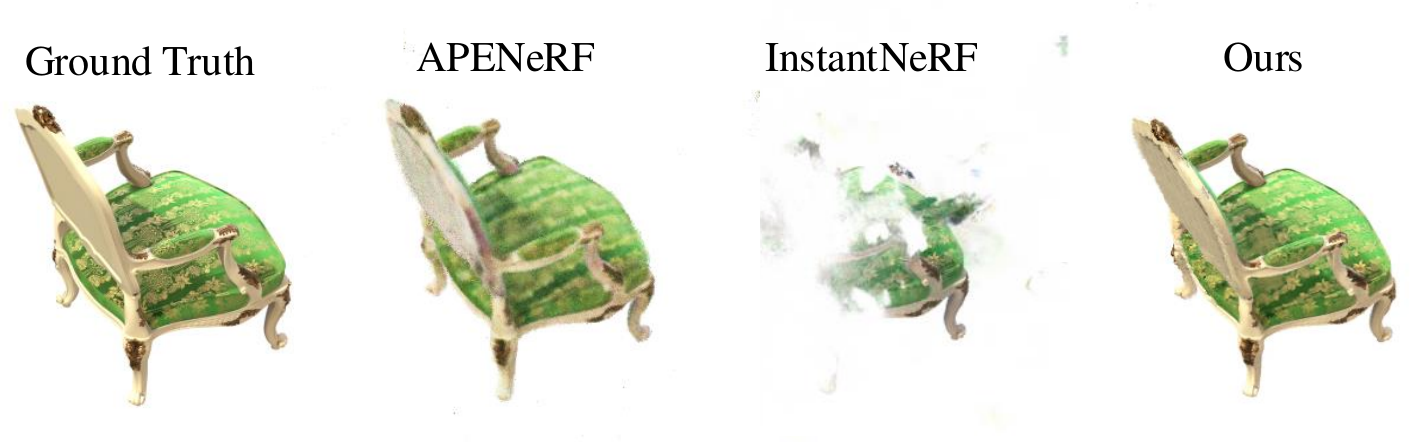} 
\vspace{-1em}
\caption{Objects generated by APENeRF, InstantNeRF and our method. APENeRF uses hash encoding and it is hard to train with 8 views on the Blender synthetic dataset whereas ours (i.e., SPE), even with limited data, can already achieve high-quality generation close to the ground truth.}
\label{fig:instantnerf}
\vspace{-1em}
\end{figure}


\section{Sinusoidal Positional Encoding}
We want to make the adoption of PE in generative AI tasks simple and effective. To achieve this, we design \emph{Sinusoidal Positional Encoding} or SPE, defined as follows
\begin{equation}\label{eq:spe}
    \begin{aligned}
        \SPE(\vx)
        &= \sin(\boldsymbol{\omega}\PE(\vx)) \\
    \end{aligned}
\end{equation}
where the $\boldsymbol{\omega}$ is a trainable vector that represents the learned features, $L \in \mathbb{N}^+$; the value of $\mathrm{PE}(\mathbf{x})$ is between $[-1,1]$. With MLP, SPE has trainable matrix $\mathbf{W}_\text{SPE}$ in the form as:
\begin{equation}\label{speeq2}\nonumber
    \begin{aligned}
        &\sin(\mathbf{W}_\text{SPE} \mathrm{PE}(\mathbf{x})) =\\
        &\qquad[
        \sin(\boldsymbol{\omega}_1\sin(\pi\mathbf{x})), \sin(\boldsymbol{\omega}_2\cos(\pi\mathbf{x})), \dots, \\        &\qquad\phantom{[}\sin(\boldsymbol{\omega}_{2L-1}\sin(2^{L-1}\pi\mathbf{x})), \sin(\boldsymbol{\omega}_{2L}\cos(2^{L-1}\pi\mathbf{x}))
        ]^\top
    \end{aligned}
\end{equation}
where $\mathbf{W}_\text{SPE} = [\boldsymbol{\omega}_1, \dots, \boldsymbol{\omega}_{2L}]$ is the weight of the first \cs{fully connected layer}, $L$ is the number of components after PE, the function $\sin(\cdot)$ is performed by a sinusoidal activation.
Note that we omit the bias term to simplify the discussion.

Intuitively, we observe that the behavior of representation in \eqref{eq:spe} is close to a normal sinusoidal wave with trainable features. A comparison between SPE learned features and the hard-coded features in PE is illustrated in Figure~\ref{fig:learnedfeature}, where we use larger $L=12$ to search the features. We find that, with SPE, even with large $L$ the actual learned frequency is within the band where $L=10$, which explains why PE takes $L=10$ as the optimal setup. From  Figure~\ref{fig:learnedfeature}, we also notice that the difference between objects on features is mainly high-frequency components. Therefore, learning high-frequency details are critical to ensure a high fidelity synthesis.\looseness=-1

We also find that the proposed form in Equation \eqref{eq:spe} is significantly more efficient to train than learning Fourier features which directly operate on $\mathbf{x}$, i.e.~learning $\sin(\omega\mathbf{x})$ as in Equation~\eqref{eq:ape} \citep{gao2023adaptive}. 
We analyze this enhanced learning efficiency with detail in \S\ref{sec:altlearn}.

\begin{figure}[t]
\centering
\includegraphics[width=0.48\textwidth]{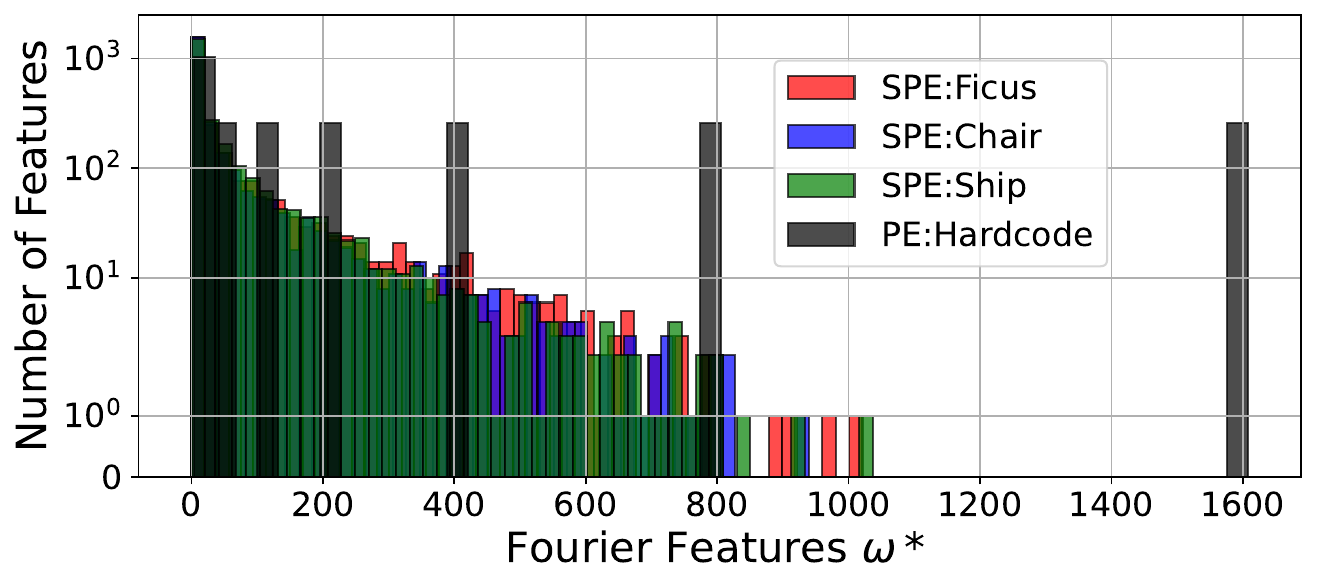} 
\vspace{-1em}
\caption{Learned features by SPE in different objects in the Blender dataset. A learned feature rarely goes beyond $L=9$, and therefore set $L=10$ is the optimal configuration for PE. $\omega^*$ is the feature: $\omega^* = \omega\cdot2^{l-1}, l\in\{1, 2, \dots, L\}$.}
\label{fig:learnedfeature}
\vspace{-1em}
\end{figure}

\subsection{Design Choices}
We discuss different possible designs for Equation~\eqref{eq:spe}. We could consider (1) other periodic activation functions instead of the sinusoidal function and (2) other encoding methods instead of sinusoidal-based encoding. 

\paragraph{Choice of periodic activation functions} We first discuss which periodic activation functions are effective for SPE. According to the Fourier theorem, any continuous series can be expanded as a combination of sinusoidal waves. Therefore representing sinusoidal waves is critical to guarantee that all target outputs can be potentially represented. 
Basically, to make sure the network can represent the sinusoidal wave, Equation \ref{approxtarget0} in Theorem~\ref{thm:trainable-spe} must be met, otherwise the rest of the network after SPE has to learn a weight that is conditional on the Fourier features, which makes the learning more difficult. For instance, if there is significant linear mapping in the activation, then $\mathbf{I}(\cdot)$ or $\mathbf{S}(\cdot)$ will have a form (proof in \S\ref{otheract}) where each feature has to be learned in coupling with specific input $\mathbf{x}$ as: \begin{equation}\label{eqst_0}
    \mathbf{S}(t) = \frac{\sin(\boldsymbol{\omega}\cdot t)}{\sqrt{\boldsymbol{\omega}^2-(2n\pi)^2} \bmod 2\pi}
\end{equation}
where $\boldsymbol{\omega}$ is a trainable vector and $t = 2^{l}\pi\mathbf{x}, l\in\{0,\dots, L-1\}$. The function in Equation~\ref{eqst_0} is much harder to learn than Equation~\ref{IS_0} and \ref{IS_1}, potentially representing wrong frequency features for $\mathbf{x}$ that is not in the training set. Therefore we propose using sinusoidal activation as a simple-yet-effective periodic activation. \cs{As for the other common periodic activation functions, we empirically test their performance in NeRF and find that they perform significantly worse than the sinusoidal function. More detailed discussion about other activation functions in \S\ref{eval_other_func}.}

\paragraph{Choice of encoding methods}\label{hashreasoning}

We now discuss different candidate encoding methods: one blob encoding and hash encoding~\citep{muller2022instant}. While claiming faster training, we find that the hash encoding based method shows much worse performance than the PE or SPE-based method in few-view NeRF.
Instead of using sinusoidal functions, in hash encoding the input $\mathbf{x}$ is encoded by hashing~\citep{muller2022instant}
\begin{equation} \label{eq:hash-encoding}
   h(\mathbf{x}) =  \left(\bigoplus^d_{i=1}x_i\pi_i\right) \mod T
\end{equation}
where $\oplus$ denotes the bit-wise XOR operation and $\pi_i$ are unique,
large prime numbers. Looking from the PE viewpoint, hash encoding uses a pseudo random function~\citep{muller2019neural} to replace the sinusoidal function of PE. Essentially, since there are multiple hash tables, hash encoding essentially performs a Monte Carlo search for proper encoding. Then the model needs to learn from the encoding to match actual features. 

When using hash encoding, there will be a hash collision, where different $x_i$ might be mapped into the same value in a pseudo-random manner. In the implementation with MLP, such as InstantNeRF~\citep{muller2022instant} and NeRFfacto~\citep{tancik2023nerfstudio}, they do not explicitly handle collisions of the hash functions by typical means like probing, bucketing, or chaining. Instead, we rely on the neural network to
learn to disambiguate hash collisions themselves, avoiding control flow divergence, reducing implementation complexity and improving performance. 
Through the illustration from Figure \ref{fig:instantnerf} and an experiment in \S\ref{subsec:nerf}, we find that NeRF with hash encoding on a small number of training views performs much worse than SPE in terms of fidelity. Both searching for appropriate encoding and learning to fix hash collision requires lots of effective inputs (\emph{i.e.}, distinct views in NeRF), and hence it is hard to train NeRF with Hash encoding on limited views. 

Generally speaking, although hash encoding has a significant advantage in terms of computation (training time), its learning efficiency, i.e.~the ability to extract appropriate features, is worse than SPE.

\subsection{Practical implementation}\label{sec:implementation}
\begin{figure}[t]
\centering
\includegraphics[width=0.45\textwidth]{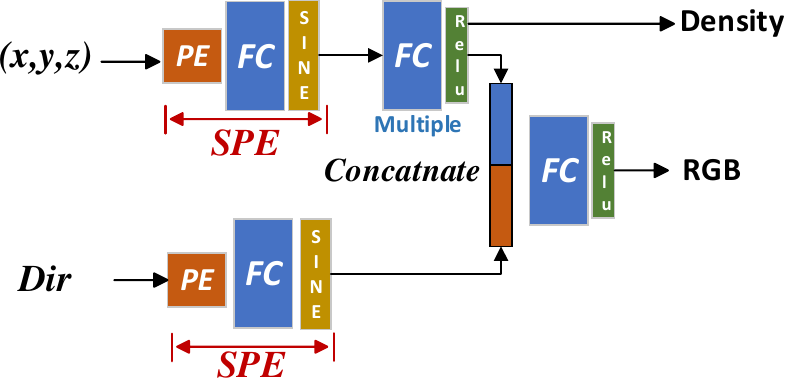} 
\vspace{-1em}
\caption{Example of implementing SPE in NeRF: Using periodic activation function for Frequency Encoded series. (x,y,z) is the coordinates of the object. Dir indicates direction of the view.}
\label{fig:encoding}
\vspace{-1em}
\end{figure}

Here we provide a practical guide to implement SPE. For a given network, one can replace the first activation function of the MLP after PE with a sinusoidal activation. We illustrate this using an SPE-enabled NeRF in Figure~\ref{fig:encoding}. Incorporating SPE into existing neural networks requires no additional hyperparameter tuning and maintains a similar network structure, thereby preserving the original training process for seamless integration.

The only hyper-parameter of SPE is the $L$. 
Since Fourier feature $\omega$ is trainable, larger the value of $L$ in SPE the better and should match the network size, i.e., $L \propto N_{\text{para}}$ where $N_{\text{para}}$ is the number of parameters in the network, which we establish in Theorem~\ref{thm:pe-accuracy}.
\begin{theorem}\label{thm:pe-accuracy}
    $L$ determines the approximation accuracy of SPE to a trainable PE (proof in \S\ref{spelearn}).
    \begin{equation}
    \sin(\boldsymbol{\omega}\mathrm{PE}(\mathbf{x})) = \PE(\boldsymbol{\omega}(\mathbf{x}+\frac{1}{2^L}))
    \end{equation}
\end{theorem}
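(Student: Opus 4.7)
The plan is to verify the claimed identity component-by-component and to interpret $1/2^L$ as the effective resolution of a $\PE$ basis with $L$ frequency octaves. First I would unfold $\mathrm{PE}(\vx)$ into its $2L$ scalar entries $\sin(2^{l-1}\pi\vx)$ and $\cos(2^{l-1}\pi\vx)$ for $l=1,\dots,L$, and rewrite $\sin(\boldsymbol{\omega}\mathrm{PE}(\vx))$ as the vector whose $j$-th entry has one of the two forms $\sin(\omega_j\sin(2^{l-1}\pi\vx))$ or $\sin(\omega_j\cos(2^{l-1}\pi\vx))$. The target $\PE(\boldsymbol{\omega}(\vx+1/2^L))$ on the right-hand side can be unfolded identically, leaving us to prove $2L$ scalar identities whose left-hand sides are nested sinusoids and whose right-hand sides are simple sinusoids at shifted arguments.

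Next I would invoke the Jacobi--Anger expansions
\begin{equation}
\sin(z\sin\theta) = 2\sum_{k=0}^{\infty}J_{2k+1}(z)\sin((2k{+}1)\theta), \qquad \sin(z\cos\theta) = 2\sum_{k=0}^{\infty}(-1)^{k}J_{2k+1}(z)\cos((2k{+}1)\theta),
\end{equation}
to each left-hand scalar entry, turning it into an infinite sum of sinusoids whose frequencies are \emph{odd} multiples of the level-$l$ base frequency $2^{l-1}\pi$. The leading Bessel coefficient $J_1(\omega_j)$ multiplies a term that, after absorbing the scalar into a rescaled $\boldsymbol{\omega}$, matches the corresponding entry of a trainable PE $\PE(\boldsymbol{\omega}\vx)$; the residual $k\geq 1$ tail carries frequencies $3\cdot 2^{l-1}\pi,5\cdot 2^{l-1}\pi,\dots$, all lying above the band a level-$L$ PE can represent.

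The step I expect to be the main obstacle is explaining why this residual collapses into precisely a shift $\vx\mapsto \vx+1/2^L$ rather than an unstructured approximation error. I would tackle it via a Nyquist-type argument: the finest octave of $\PE$ has period $2/2^{L-1}=1/2^{L-2}$, so in the corresponding Shannon-sampling sense the smallest phase offset distinguishable from zero has magnitude $1/2^L$. Every Jacobi--Anger residue, being concentrated at frequencies higher than $2^{L-1}\pi$, is indistinguishable from such a sub-grid shift once reconstructed from PE coefficients at level $L$. Aggregating the per-entry identity then delivers the vector equation, with $1/2^L$ acting as a tight upper bound on the effective phase error between $\sin(\boldsymbol{\omega}\mathrm{PE}(\vx))$ and the ideal trainable $\PE(\boldsymbol{\omega}\vx)$; in particular the accuracy improves monotonically as $L$ grows, which is exactly the statement of the theorem. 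I would close by observing that this is consistent with Figure~\ref{fig:learnedfeature}: once $L$ is large enough that $1/2^L$ is smaller than the finest feature of the target signal (around $L{=}10$ for Blender NeRF), further increases in $L$ stop mattering, matching the paper's empirical prescription $L\propto N_{\text{para}}$.
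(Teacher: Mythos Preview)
Your argument breaks at the step before the one you flagged as the obstacle. The Jacobi--Anger expansion gives
\[
\sin(\omega\sin\theta)=2J_1(\omega)\sin\theta+2J_3(\omega)\sin 3\theta+\cdots,
\]
so the leading term is $2J_1(\omega)\sin\theta$: it has the \emph{same} frequency as the inner $\PE$ entry, with $\omega$ appearing only as an amplitude via the Bessel coefficient. The target $\PE(\boldsymbol{\omega}\vx)$, however, has entries like $\sin(2^{l-1}\pi\,\omega\,\vx)$, a genuinely different frequency. ``Absorbing the scalar into a rescaled $\boldsymbol{\omega}$'' cannot turn an amplitude factor into a frequency change, so the leading term never matches the trainable PE you want. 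In addition, for levels $l<L$ the residual harmonics sit at $3\cdot 2^{l-1}\pi,\,5\cdot 2^{l-1}\pi,\dots$, which are \emph{inside} the level-$L$ PE band, not above it; the Nyquist/shift argument for the tail therefore has no footing even if the leading-term issue were fixed.

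The paper's route is entirely different and does not use Jacobi--Anger. It is a grid/linearisation argument (carried out under the standing assumption $\omega\in\mathbb{N}^+$): writing $t=2^{l}\pi\vx$, whenever $t$ is near $n\pi$ one has $\sin t\approx t-n\pi$ and hence $\sin(\omega\sin t)\approx\sin(\omega t)$; near $(n+\tfrac12)\pi$ the $\cos$ branch plays the same role. Two $\omega$-agnostic selector functions $\mathbf{I}(t),\mathbf{S}(t)$ (Equations~\eqref{IS_0}--\eqref{IS_1}) pick the correct branch, and the subsequent MLP only needs to learn these binary selectors. At the finest octave the ``good'' grid points are $x=n/2^L$, so every $x$ lies within $1/2^L$ of one; that distance is precisely the approximation error appearing in the theorem. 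If you want to repair your proposal, replace the harmonic-series viewpoint by this pointwise argument: show that at the grid $x=n/2^L$ the two sides coincide, and then bound the deviation for general $x$ by the grid spacing.
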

Using larger $L$ should not cause unexpected artefacts because the features in the PE part will be tuned by $\boldsymbol{\omega}$. In our experiments, we can optimize $L$ by increasing the number until the result does not improve.

\subsection{Analysis of representation power}

In the following, we analyze the representation power of SPE by showing that SPE after learning can effectively represent PE with optimal parameters. 

\begin{theorem}\label{thm:trainable-spe}
For arbitrary input $\mathbf{x}$, the network can learn $\omega$ agnostic functions $\mathbf{I(\cdot)}$ and $\mathbf{S(\cdot)}$ to make SPE have the same effect as PE with Fourier features tuned by $\omega$.
\end{theorem}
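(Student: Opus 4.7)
The plan is to exhibit concrete $\boldsymbol{\omega}$-agnostic candidates for $\mathbf{I}$ and $\mathbf{S}$---namely $\mathbf{I}(t)=t$ and $\mathbf{S}(t)=\sin(t)$---and then show that composing them with the SPE output reproduces the effect of a PE whose Fourier features are rescaled by $\boldsymbol{\omega}$. The desired equivalence is exactly the one formalized in Theorem~\ref{thm:pe-accuracy}, which I would invoke after verifying its ingredients, so the argument reduces to showing that these simple forms suffice.

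First I would write $\SPE(\mathbf{x})$ coordinate-wise: each band $l\in\{0,\dots,L-1\}$ contributes the pair $\sin(\omega_l\sin(2^l\pi\mathbf{x}))$ and $\sin(\omega_l\cos(2^l\pi\mathbf{x}))$. Applying the Jacobi--Anger identities
\begin{align*}
\sin(a\sin\theta) &= 2\sum_{k=0}^{\infty} J_{2k+1}(a)\sin((2k+1)\theta),\\
\sin(a\cos\theta) &= 2\sum_{k=0}^{\infty}(-1)^k J_{2k+1}(a)\cos((2k+1)\theta),
\end{align*}
each pair is re-expressed as a linear combination of pure sinusoids at integer multiples of $2^l\pi$, with amplitudes given by Bessel functions of $\omega_l$. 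Truncating at an order matched to the geometric spacing of the PE bands should yield, up to an $O(2^{-L})$ shift in the argument, the identity $\sin(\boldsymbol{\omega}\PE(\mathbf{x}))=\PE(\boldsymbol{\omega}(\mathbf{x}+1/2^L))$ of Theorem~\ref{thm:pe-accuracy}.

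Next I would read off $\mathbf{I}$ and $\mathbf{S}$ from the right-hand side of that identity: the coordinates of $\PE(\boldsymbol{\omega}(\mathbf{x}+1/2^L))$ are literally $\sin$ and $\cos$ of linear maps of $\mathbf{x}$, so $\mathbf{I}$ can be taken as the identity-type linear map and $\mathbf{S}$ as the fixed sinusoid, neither of which carries $\boldsymbol{\omega}$ inside its definition. All of the $\boldsymbol{\omega}$-dependence is pushed outside, into the Bessel amplitudes, which are absorbed into the trainable first-layer weights of the downstream MLP. For contrast, a non-sinusoidal periodic activation forces $\mathbf{S}$ to take the shape of Equation~\ref{eqst_0}, where $\boldsymbol{\omega}$ appears explicitly inside the functional form; exhibiting this asymmetry closes the argument.

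The main obstacle is controlling the Jacobi--Anger tail tightly enough to collapse the infinite expansion into the clean shift identity of Theorem~\ref{thm:pe-accuracy}. Since $|J_{2k+1}(\omega)|$ decays super-exponentially in $k$ once $k>|\omega|/2$, and since the PE bands are geometrically spaced so that high-order harmonics of band $l$ coincide with the harmonics of band $l+1$, a dyadic repackaging of the series should give a remainder of order $2^{-L}$. Making this bookkeeping precise---and verifying that the absorbed Bessel coefficients never need to re-enter $\mathbf{I}$ or $\mathbf{S}$ themselves---is the one genuinely technical step of the proof.
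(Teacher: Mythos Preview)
Your proposal has a genuine gap. The target identity the paper is after (Equation~\eqref{approxtarget0}) is
\[
\sin(\boldsymbol{\omega}\sin t)\cdot\mathbf{I}(t)+\sin(\boldsymbol{\omega}\cos t)\cdot\mathbf{S}(t)=\sin(\boldsymbol{\omega}\,t),
\]
so $\mathbf{I}$ and $\mathbf{S}$ are \emph{multiplicative} gates on the two SPE channels, not functions to be composed with the output. Your concrete candidates $\mathbf{I}(t)=t$, $\mathbf{S}(t)=\sin t$ fail this identity immediately: at $t=\pi/2$ the left side is $(\pi/2)\sin\boldsymbol{\omega}$ while the right side is $\sin(\boldsymbol{\omega}\pi/2)$. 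More importantly, the Jacobi--Anger route cannot deliver $\boldsymbol{\omega}$-agnostic gates. The expansion expresses $\sin(\boldsymbol{\omega}\sin t)$ as $\sum_k J_{2k+1}(\boldsymbol{\omega})\sin((2k+1)t)$; extracting $\sin(\boldsymbol{\omega}\,t)$ from such a sum requires inverting the Bessel amplitudes, and those coefficients depend on $\boldsymbol{\omega}$. You try to escape this by ``absorbing the Bessel amplitudes into the trainable first-layer weights of the downstream MLP,'' but those downstream weights \emph{are} precisely what $\mathbf{I}$ and $\mathbf{S}$ model---so you have just made $\mathbf{I}$ and $\mathbf{S}$ $\boldsymbol{\omega}$-dependent, which is the very thing the theorem rules out. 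Invoking Theorem~\ref{thm:pe-accuracy} does not help either: that theorem quantifies the approximation error in $L$ and is itself derived \emph{from} the $\mathbf{I},\mathbf{S}$ construction, so using it here is circular.

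The paper's argument is entirely different and much more elementary. It takes $\mathbf{I}$ and $\mathbf{S}$ to be binary selectors---$\mathbf{I}(t)=1,\mathbf{S}(t)=0$ near $t=(n+\tfrac12)\pi$ and the reverse near $t=n\pi$---and uses only the local linearization $\sin t\approx \pm(t-n\pi)$ (respectively $\cos t\approx \mp(t-(n+\tfrac12)\pi)$) at those points, so that $\sin(\boldsymbol{\omega}\sin t)\approx\sin(\boldsymbol{\omega}\,t)$ there for integer $\boldsymbol{\omega}$. Since any $t=2^l\pi x$ with large enough $L$ lands within $2^{-L}$ of such a grid point, the gates depend only on where $t$ sits relative to $\tfrac{n\pi}{2}$ and never on $\boldsymbol{\omega}$. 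That is the missing idea: a pointwise switching argument, not a harmonic expansion.
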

\begin{proof}
Let $t = 2^{l}\pi\mathbf{x}, l\in\{0,\dots, L-1\}$
\begin{align}\label{approxtarget0}\nonumber
    &\exists \mathbf{I}(t), \mathbf{S}(t) \to \\ & \sin(\mathrm{\boldsymbol{\omega}}\sin(t))\cdot\mathbf{I}(t) + \sin(\boldsymbol{\omega}\cos(t))\cdot \mathbf{S}(t) = \sin(\boldsymbol{\omega}\cdot t)
\end{align}
As we prove in \S\ref{spelearn}, $\mathbf{I(\cdot)}$ and $\mathbf{S(\cdot)}$ have a simple form
\begin{equation}\label{IS_0}
    \text{If:}~t\rightarrow (n+\frac{1}{2})\pi,~\text{Then:}~\mathbf{I}(t) = 1, \mathbf{S}(t) = 0 
\end{equation}

\begin{equation}\label{IS_1}
    \text{If:}~t\rightarrow n\pi,~\text{Then:}~\mathbf{I}(t) = 0, \mathbf{S}(t) = 1
\end{equation}
Combining \ref{approxtarget0}, \ref{IS_0}, and \ref{IS_1}, the sinusoidal wave is approximated with both the $\sin$ and $\cos$ parts. 
\end{proof}
\begin{remark}
Intuitively, Equation \ref{IS_0} and \ref{IS_1} approach sinusoidal wave for different values of $\mathbf{x}$ by using different frequency parts of SPE to make the approximation. The $\mathbf{I(\cdot)}$ and $\mathbf{S(\cdot)}$ is $\omega$ agnostic and a simple linear binary classification of input value $\mathbf{x}$, therefore there is no extra overhead to learn such functions.
\end{remark}

\begin{theorem}\label{thm:pe-approximation}
Equation \ref{eq:spe} is an effective approximation to PE when the absolute value of learned feature $\omega$ is small
\begin{equation}\label{speapprox}
    \lim_{|\omega|\rightarrow 0} \sin(\omega\mathrm{PE}(\mathbf{x})) = \mathrm{PE}(\mathbf{x})
\end{equation}
\end{theorem}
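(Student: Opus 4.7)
The plan is to treat this as a first-order Taylor expansion argument, while noting upfront that the literal statement is slightly misleading: taking $|\omega|\to 0$ inside $\sin(\omega\PE(\vx))$ produces the zero vector, not $\PE(\vx)$. The intended reading, which I would adopt and make explicit in the first line of the proof, is that $\sin(\omega\PE(\vx))$ becomes \emph{proportional} to $\PE(\vx)$ as $|\omega|\to 0$, with the scalar of proportionality being $\omega$ itself, which is absorbable into the downstream trainable linear layer $\mathbf{W}_{\text{SPE}}$ of SPE.

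The first step I would take is to exploit the fact, already recorded right after Equation~\eqref{eq:spe}, that each coordinate of $\PE(\vx)$ lies in $[-1,1]$. This bound makes the entire argument of the outer sine uniformly small once $|\omega|$ is small, which licenses a component-wise Taylor expansion of $\sin$ about zero:
\begin{equation*}
\sin(\omega\PE(\vx)) \;=\; \omega\PE(\vx) \;-\; \tfrac{1}{6}\bigl(\omega\PE(\vx)\bigr)^{3} \;+\; O(\omega^{5}),
\end{equation*}
where the $O(\omega^{5})$ bound is uniform in $\vx$ because $\|\PE(\vx)\|_{\infty}\le 1$. Dividing through by $\omega$ (equivalently, pulling out the common scalar) yields
\begin{equation*}
\frac{\sin(\omega\PE(\vx))}{\omega} \;=\; \PE(\vx) \;+\; O(\omega^{2}),
\end{equation*}
so that $\sin(\omega\PE(\vx))/\omega \to \PE(\vx)$ uniformly in $\vx$ as $|\omega|\to 0$, which is the precise form of Equation~\eqref{speapprox}.

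Next I would close the loop by explaining why the $1/\omega$ rescaling is free in the SPE pipeline. By construction, $\SPE(\vx)$ is immediately composed with the trainable weight matrix $\mathbf{W}_{\text{SPE}}$ of the first fully connected layer, so any scalar factor common to every coordinate of $\SPE(\vx)$ can be absorbed into $\mathbf{W}_{\text{SPE}}$ without changing the function the network realises. Consequently, in the small-$|\omega|$ regime SPE acts, up to this absorbable rescaling, exactly as standard PE; combined with Theorem~\ref{thm:trainable-spe}, which covered the complementary regime, this shows that SPE strictly generalises PE rather than replacing it.

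The main obstacle is conceptual rather than technical: the display as written is vacuous under a naive reading, so the real work is to pin down the right sense of ``effective approximation.'' Once the normalisation convention is fixed, the remaining argument is a one-line application of Taylor's theorem, requiring no hypothesis beyond the boundedness of $\PE(\vx)$, which is immediate from \eqref{eq:nerf-pe}.
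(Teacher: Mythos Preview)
Your argument is correct and matches the paper's own proof in substance. The paper (\S\ref{spehaspe}) also identifies that the display as written is really a proportionality statement: it forms the ratio $\mathbf{SPE}(\vx)/\mathbf{PE}(\vx)$, invokes $\lim_{u\to 0}\sin(u)/u=1$ (your Taylor expansion rephrased), obtains a constant ratio $\boldsymbol{\omega}_{\text{SPE}}/\boldsymbol{\omega}_{\text{PE}}$, and then appeals to the subsequent fully connected layers to absorb the scale---exactly your final paragraph. The only cosmetic difference is that you Taylor-expand and divide by $\omega$, whereas the paper takes the ratio directly; your version is slightly more explicit about the uniform $O(\omega^{2})$ error, and you are more candid that the literal limit is zero, but the underlying idea is identical.
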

Details proof in \S\ref{spehaspe}. In the case original PE has captured the appropriate features, SPE can easily approach PE by using small weights. The smaller weights can be scaled up by subsequent fully connected layers without impacting the corresponding frequency band's power. Therefore, we prove that PE is an easy-to-learn sub-case of SPE.

\subsection{Discussion of training efficiency}\label{sec:altlearn}

We discuss the choice of optimization methods together with SPE.
One option is to learn $\sin(\omega\mathbf{x})$ directly as in Equation~\eqref{eq:ape}. Compared with learning in the form of $\sin(\omega\mathbf{x})$, incorporating PE with SPE simplifies training by limiting the search to a bounded space, unlike other methods that operate in a larger, unbounded space. This constraint leads to more stable training and faster convergence.
Suppose we have $L$ components in SPE, then the hardcoded features are $\mathbf{f}_{h} =[\pi, 2\pi, \dots, 2^{L-1}\pi]$, and the learned features (suppose the actual features is distinct to the hardcoded features) $\mathbf{f}_{l} =[\omega_0\cdot\pi, \omega_1\cdot2\pi, \dots, \omega_L\cdot2^{L-1}\pi]$. Assume that the actual feature is $\omega_a$, the difference between the nearest hardcoded feature and actual feature $\Delta_{\text{PE}}$ would be
\begin{align}\label{pebound}
    \Delta_{\text{PE}} = \begin{cases}
\min(2^{\beta_0}\pi - \omega_a,\omega_a - 2^{\beta_1}\pi) & \omega_a < 2^{L-1}\pi\\
 \omega_a - 2^{L-1}\pi & \omega_a \geq 2^{L-1}\pi
\end{cases}
\end{align}

where $\beta_0 = \lceil\log_2\frac{\omega_a}{\pi}\rceil$ and $\beta_1 = \lfloor\log_2\frac{\omega_a}{\pi}\rfloor$. Therefore if we have PE, then tuning the frequency always can be converted to a bounded range that is significantly smaller than $\omega_a$ when $L$ is large enough. Without embedding PE in SPE, the searching space to train $\omega$ is not bounded and will be sensitive to initialization, which is hard to carry out when the actual feature is unknown.  

\cs{
\subsection{Quantifying Learned Fourier Features}\label{sec:quantfreq}
As prior works, we use standard metrics for NeRF including PSNR and SSIM. However, we find that PSNR and SSIM cannot show the fidelity of learnt Fourier features explicitly. Therefore, we further design metrics that can quantify the effects of high-frequency and low-frequency details. These new metrics include: 
\begin{enumerate}
    \item \emph{Wavelet Decomposition Power Ratio} (WDPR) measures the high-frequency band that contributes to gain in different methods. We first conduct wavelet decomposition with $\lambda$ levels on the image to separate the high-frequency details. Level-$\lambda$ wavelet power radio is defined as the power of signal at the $\lambda th$ level decomposition compared to the ground truth decomposition.
    The WDPR is then can be calculated as follow
\begin{equation}\label{waveletpower}
    \hspace*{-1.75em}\mathrm{WDPR}(\mathbf{y}_{\text{true}}, \mathbf{y}_{\text{syn}},\lambda) = \frac{|\mathbf{P}(W(\mathbf{y}_{\text{true}},\lambda)) - \mathbf{P}(W(\mathbf{y}_{\text{syn}},\lambda))|}{\mathbf{P}(W(\mathbf{y}_{\text{true}},\lambda))}
\end{equation}
where $\mathbf{P}(\mathbf{y}) = \sum y_i^2, y_i \in \mathbf{y}$, $\mathbf{y}_{\text{syn}}$ denotes synthesis view, and $\mathbf{y}_{\text{true}}$ denotes the ground truth synthesis view.
    \item \emph{Relative Wasserstein Distance Error} (RWDE) which assesses how accurately the model learns the distribution change. Let $y_{\text{train}}$ denote the training view, $y_{\text{syn}}$ denote a synthesis view, and $y_{\text{true}}$ denote the ground-truth synthesis view, the RWDE is defined as:
\begin{equation}
\mathrm{RWDE}(\mathbf{y}_{\text{train}}, \mathbf{y}_{\text{syn}}, \mathbf{y}_{\text{true}}) = \frac{\mathrm{WD}(\mathbf{y}_{\text{train}}, \mathbf{y}_{\text{syn}})}{\mathrm{WD}(\mathbf{y}_{\text{true}}, \mathbf{y}_{\text{syn}})}
\end{equation}
where $\mathrm{WD}(\mathbf{a},\mathbf{b})$ compute the Wasserstein distance between two images $\mathbf{a},\mathbf{b}$, and take the average of different RGB channels if needed.
Intuitively, RWDE gives a sense of whether the model tends to overfit the distribution of existing views. If $\text{RWDE}<1$ then it tends to overfit to existing views, and vice versa. 
\end{enumerate}

}

\section{Experiments}\label{sec:exp}

We evaluate SPE against different baseline methods using various generation tasks: few-view NeRF, Text-to-Speech synthesis, and 1D regression with NTK. 
For NeRF, we focus on the case with few views and compare the fidelity of the new view synthesis (\S\ref{subsec:nerf}). For the speech synthesis task, we focus on improving the FastSpeech~\citep{ren2019fastspeech} method, where the insufficient accuracy of MLP comes as a main obstacle for high-quality generation.
We show the use of SPE significantly improves their performance with a single-line of change in their open-source code base (\S\ref{sec:speech-synthesis}).
Finally, we compare SPE against PE in 1D regression with NTK, following its original paper~\citep{tancik2020fourier}, for which we observe that SPE shows a significant gain in convergence speed and fidelity by simply changing one activation function (\S\ref{sec:ntk}).

\subsection{Few-View NeRF}\label{subsec:nerf}

We implement SPE in NeRF following the way depicted in Figure~\ref{fig:encoding}. By default, we build our SPE method on top of the SOTA model: FreeNeRF~\citep{yang2023freenerf}. Specifically, for the training with SPE on FreeNeRF, we find that it is effective to train FreeNeRF and SPE with adversarial loss to minimize the Wasserstein distance to the target view. Details of the experimental setup can be found in \S\ref{detailedconfig}, and more analysis results can be found in \S\ref{appendix:EMD}.

For baselines, we  consider PE-based methods, including DietNeRF~\citep{jain2021putting}, MipNeRF~\citep{barron2021mip}, and FreeNeRF~\citep{yang2023freenerf}, as well as those with hash encoding, such as InstantNeRF~\citep{muller2022instant} and NeRFfacto~\citep{tancik2023nerfstudio}. APENeRF~\citep{gao2023adaptive} is as far as we know the only method that learns the Fourier features explicitly, so we also include this work as a baseline. All the baselines are open-source and we implement SPE on top of their official implementation.

\begin{table*}
\centering
\setlength{\tabcolsep}{4pt}
\begin{small}
    \begin{tabular}{l|cccc|cc|c|ccc}
    \toprule
    \multicolumn{1}{c|}{}&\multicolumn{4}{c|}{Non-Adaptive PE}&\multicolumn{2}{c|}{Hash Encoding}&\multicolumn{1}{c|}{APE}&\multicolumn{3}{c}{SPE}\\ \hline
    Metric&NeRF  &DietNeRF  &MipNeRF  &FreeNeRF  &InstantNeRF  &NeRFfacto  &APENeRF &\makecell{MipNeRF\\+SPE} & \makecell{FreeNeRF\\+SPE}  & \makecell{FreeNeRF\\+SPE+GAN}\\
    \midrule
    PSNR $\uparrow$ &14.983  &23.142  &23.344  &24.259  & 14.681  &14.934  &23.067  &23.563  &24.951  &25.202 \\
    SSIM $\uparrow$ &0.689  & 0.865 &0.879  &0.883  &0.676  &0.682  &0.863  &0.885  &0.898  &0.910 \\
    \bottomrule
    \end{tabular}%
    \end{small}
\caption{Performance comparison of NeRF models and encoding methods.}
\label{table:performance-comparison-models}
\vspace{-1em}
\end{table*}

\begin{table}
\centering
\setlength{\tabcolsep}{4pt}
\begin{small}
    \begin{tabular}{lcccc}
    \toprule
    Metric & FreeNeRF  & w/ SPE  & w/ GAN  & w/ SPE + GAN  \\
    \midrule 
    PSNR $\uparrow$ & 24.259 & 24.951 & 24.531 & 25.202 \\
    SSIM $\uparrow$ & 0.883 & 0.898 & 0.889 & 0.910 \\
    \bottomrule
    \end{tabular}%
    \end{small}
\setlength{\tabcolsep}{6pt}
\caption{Ablation Study on FreeNeRF.}
\label{table:ablation}
\vspace{-1em}
\end{table}

\begin{table}
\centering
\setlength{\tabcolsep}{4pt}
\resizebox{\columnwidth}{!}{%
\begin{small}
\begin{tabular}{lccccccc}
\toprule
$\lambda$ & DietNeRF & NeRFfacto & APENeRF & FreeNeRF & \makecell{FreeNeRF\\+ SPE} & \makecell{FreeNeRF\\+ GAN}  & \makecell{FreeNeRF\\+ SPE + GAN}\\\midrule
$1$  &0.968  &0.986  &0.982  &0.981 &0.988  &0.982 &0.990\\
$2$  &0.930  &0.972  &0.971  &0.974 &0.975  &0.981 &0.978\\
$3$  &0.833  &0.858  &0.861  &0.877 &0.903  &0.882 & 0.927\\
$4$  &0.590  & 0.682  &0.694  & 0.779 &0.796  &0.760 & 0.894\\
\bottomrule
\end{tabular}%
    \end{small}
    }
\setlength{\tabcolsep}{6pt}
\caption{WPDR with different levels $\lambda$.}
\label{table:power-ratio}
\vspace{-1em}
\end{table}

\begin{figure}[t]
    \centering
    \includegraphics[width=0.45\textwidth]{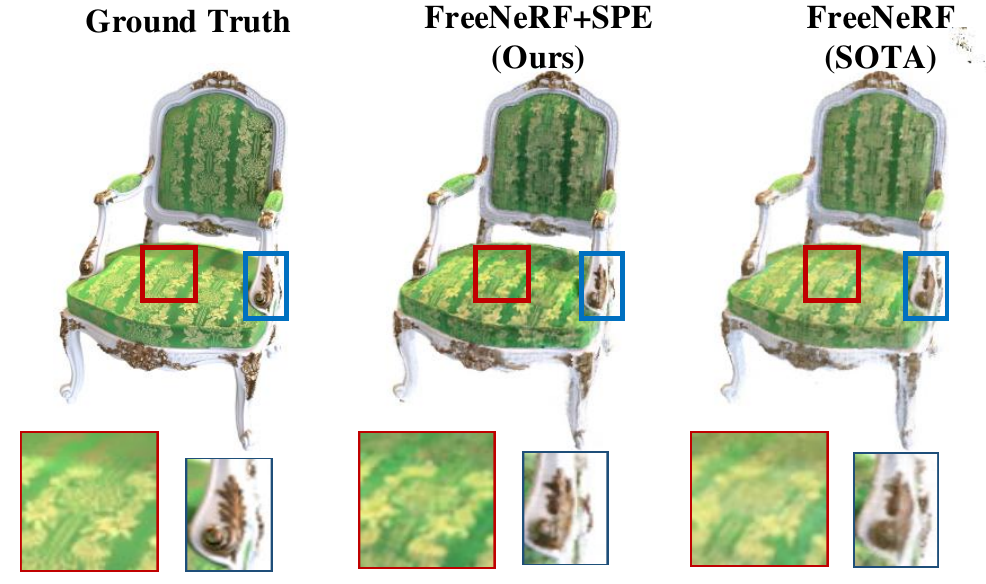} 
    \vspace{-1em}
    \caption{Visual enhancement of Blender Chair with SPE. Our method yields clearer patterns compared to FreeNeRF.}
    \label{fig:chair-overview}
    \vspace{-1em}
\end{figure}


\begin{figure}[t]
    \centering
    \includegraphics[width=0.4\textwidth]{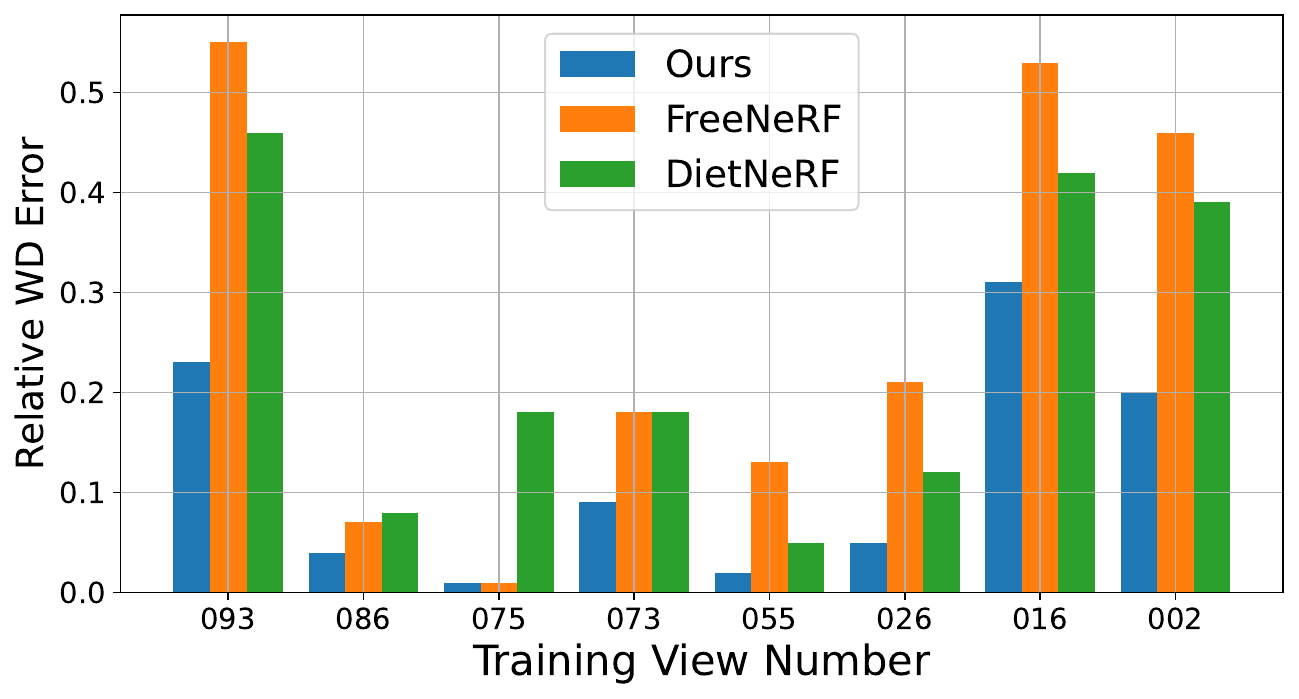} 
    \vspace{-1em}
    \caption{RWDE with different training views.}
    \label{fig:RWDE}
    \vspace{-1em}
\end{figure}

\begin{figure}[t]
    \centering
    \includegraphics[width=0.43\textwidth]{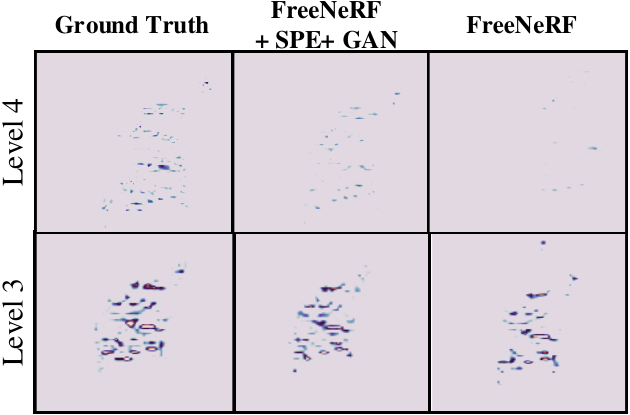} 
    \vspace{-1em}
    \caption{Wavelet decomposition results.}
    \label{fig:waveletdecomp}
    \vspace{-1em}
\end{figure}

We present a performance comparison of various NeRF models and encoding techniques in Table \ref{table:performance-comparison-models}. The table includes seven existing NeRF models categorized by four positional encoding techniques: (non-adaptive) PE (Equation \ref{eq:nerf-pe}), hash encoding (Equation \ref{eq:hash-encoding}), APE (Equation \ref{eq:ape}), and SPE (Equation \ref{eq:spe}). It is evident that our method, SPE, surpasses all other positional encoding techniques in the NeRF task for both PSNR and SSIM metrics. Notably, SPE achieves approximately a 3\% improvement in PSNR and a 1.7\% increase in SSIM compared to the state-of-the-art FreeNeRF \citep{yang2023freenerf} method. This leads to a significant visual enhancement, as illustrated in Figures \ref{fig:chair-overview}. Furthermore, we discovered that GANs \citep{goodfellow2020generative, gulrajani2017improved} can further enhance the performance of PE-based NeRF models.

An ablation study on FreeNeRF, incorporating both SPE and GAN enhancements, is presented in Table \ref{table:ablation}. In this study, we use the SOTA NeRF, FreeNeRF \citep{yang2023freenerf}, as the main baseline. We observe that GANs contribute to performance improvements for both the PE-based (vanilla) FreeNeRF and our SPE-enhanced FreeNeRF, with notably higher accuracy in terms of PSNR and SSIM when combined with SPE. A primary reason is that SPE can provide a richer frequency band (shown in Figure ~\ref{fig:learnedfeature}), where GANs can be exploited with greater flexibility. This can be further reflected in Figure~\ref{fig:RWDE} where GANs can effectively reduce the RWDE which reflects the extent of pixel distortion.

Table \ref{table:power-ratio} presents the WDPR across levels 1-4. The table illustrates that SPE surpasses all other NeRF models across various positional encoding techniques, particularly at higher levels of wavelet decomposition, which correspond to higher frequency features. Notably, at $\lambda = 3$, SPE achieves a 3\% improvement over the state-of-the-art FreeNeRF. Moreover, while integrating GAN with FreeNeRF and PE reduces the power ratio from 0.779 to 0.760, employing GAN with SPE elevates the performance from 0.796 to 0.894, marking a 12.3\% enhancement. This reinforces our assertion that the broader frequency bands facilitated by SPE enhance the efficacy of GANs in training. Such improvements lead to noticeable visual distinctions, as depicted in Figure \ref{fig:waveletdecomp}.

\subsection{Text-to-Speech}\label{sec:speech-synthesis}



\begin{figure}[t]
    \centering
    \includegraphics[width=0.42\textwidth]{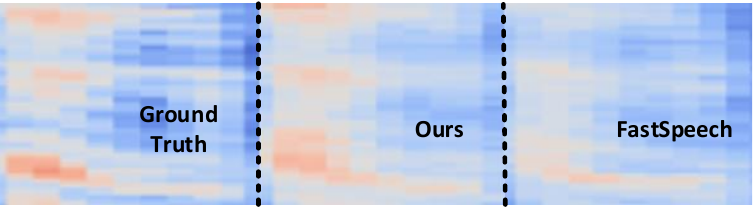}\\
    High Frequency Details\\
    \includegraphics[width=0.42\textwidth]{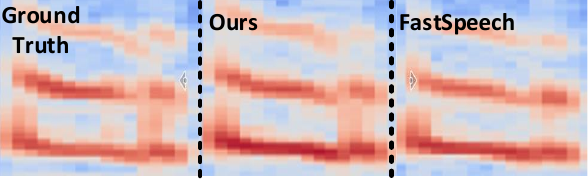}\\
    Low Frequency Details\\
    \caption{Speech spectrum details with red regions indicating signal power. Our method achieves better alignment of these red regions with the ground truth compared to vanilla FastSpeech.\looseness=-1}\label{fig:details}
    \vspace{-1em}
\end{figure}

\begin{table}[t]
\centering
\setlength{\tabcolsep}{4pt}
\begin{sc}
\begin{small}
    \begin{tabular}{lrrrrr}
        \toprule
        Method & Case 1 & Case 2 & Case 3 & Case 4 & Case 5 \\ \midrule
        SPE & \textbf{104.4} & \textbf{97.7} & \textbf{103.2} & \textbf{103.9} & \textbf{97.6} \\ 
        PE & 103.4 & 96.3  & 101.6 & 102.2 &  96.4 \\ 
        FS & 100.7  & 94.4 & 100.2 & 99.3 & 94.7 \\ \hline 
        SPE & \textbf{0.223}  & \textbf{0.223}  & \textbf{0.239} & \textbf{0.240} & \textbf{0.262} \\ 
        PE & 0.210 & 0.209 & 0.228 & 0.229  & 0.236 \\
        FS & 0.191 & 0.191 & 0.205 & 0.204  & 0.225\\ \bottomrule
    \end{tabular}
    \end{small}
\end{sc}
\setlength{\tabcolsep}{6pt}
\caption{Performance comparison of position encoding methods in speech sytnehsis for different cases. Top half is PSNR and bottom half is SSIM, both the larger the better.}
\label{table:fastspeech-psnr-ssim}
\vspace{-1em}
\end{table}

We then evaluate SPE in speech synthesis tasks. FastSpeech \citep{ren2019fastspeech} is selected as the base method for speech use case because the linear layers tend to be a bottleneck in FastSpeech (see ``Figure 1 (a), Feed-Forward Transformer'' in \citep{ren2019fastspeech}). For this model there is no official code available, we therefore use the implementation in \citep{xcmyzfastspeech}, which shows negligible difference on performance compared with the official audio samples \citep{officialfastspeech}.

In Table \ref{table:fastspeech-psnr-ssim}, a comparative analysis of three audio synthesis methods is conducted, with PSNR and SSIM as metrics. In the table, ``SPE'' refers to a model integrating SPE with the original FastSpeech. ``PE'' denotes the incorporation of PE (Equation \ref{eq:nerf-pe}) into FastSpeech, and ``FS'' represents the original FastSpeech model. To make a fair comparison, we set $L = 5$ for PE in all methods. The computation of PSNR and SSIM involves contrasting the synthesized audio from three methods against the ground truth (target audio spectrum graph). These metrics are crucial for assessing the fidelity and perceptual quality of synthesized audio, quantitatively measuring each method's approximation to the ground truth.

The table assesses metrics across five cases with high-frequency features, similar to high-frequency details from Figure~\ref{fig:details}. The results lead to two main observations. First, SPE consistently has the best performance in both PSNR and SSIM across all cases. Second, PE demonstrates superior performance over FS in all scenarios. These findings affirm that PE can enhance fidelity for high-frequency feature generation, and the new SPE further amplifies this benefit by adaptively selecting the most relevant features.

Notably, Case 4 exhibits a remarkable improvement of 4.6\% compared to vanilla FastSpeech, highlighting the efficacy of SPE in enhancing signal quality. Similarly, for SSIM values, which measure the structural similarity to the ground truth, a parallel trend is observed. The SPE-augmented FastSpeech achieves SSIM scores ranging from 0.223 to 0.262, outperforming the original FastSpeech model, which scores between 0.191 and 0.225. This enhancement indicates that SPE not only improves the fidelity of the audio signal but also more effectively preserves its structural integrity compared to the baseline model. These findings underscore the capability of SPE to significantly boost performance in speech generation use cases.

\subsection{1D Regression with Neural Tangent Kernel}\label{sec:ntk}

\begin{figure}[t]
    \centering
    \begin{subfigure}[b]{\columnwidth}
        \centering
        \includegraphics[width=\textwidth]{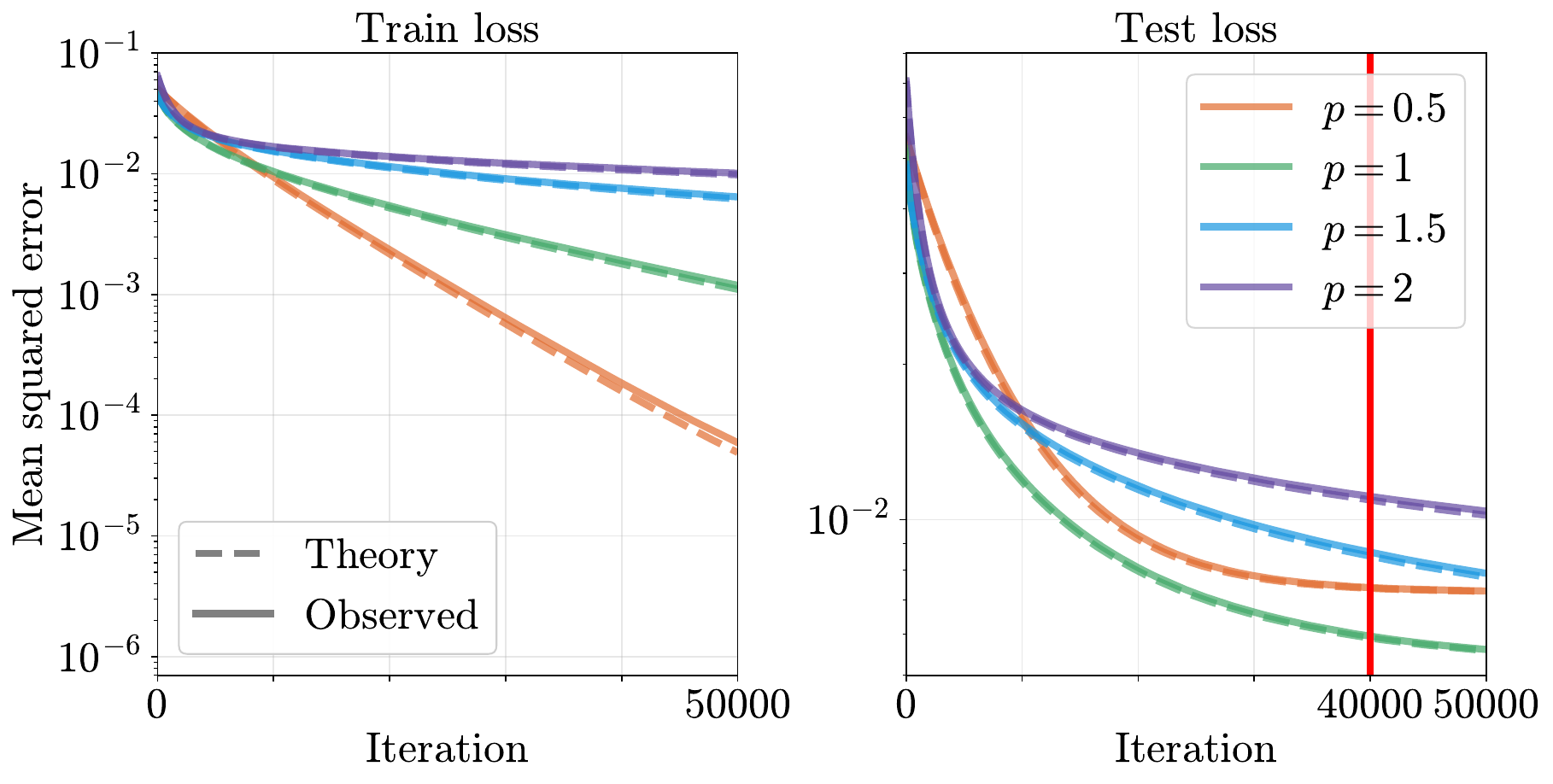}
    \caption{PE}\label{fig:ntk_orig}
    \end{subfigure}
    \begin{subfigure}[b]{\columnwidth}
        \centering
        \includegraphics[width=\textwidth]{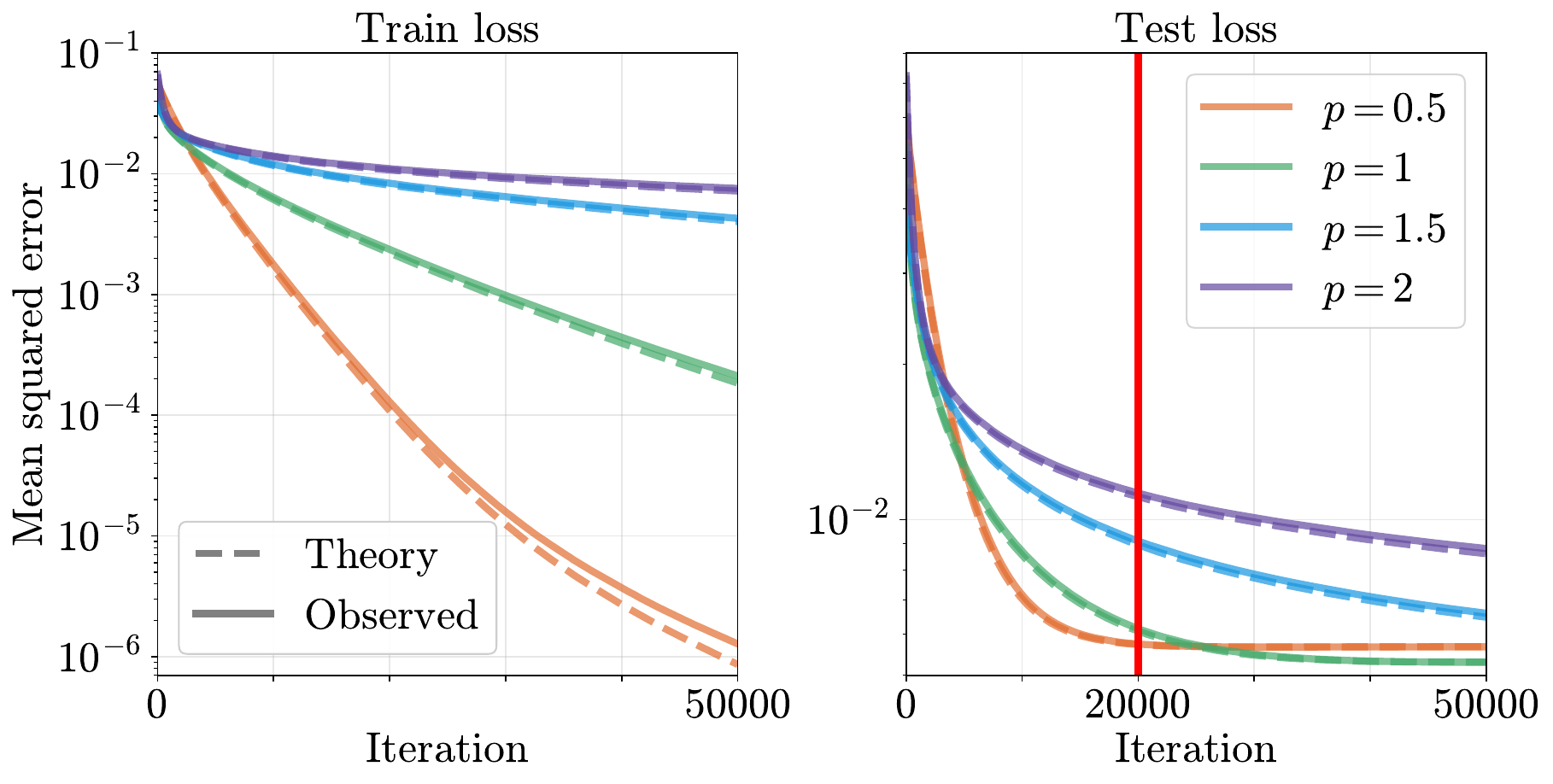}
        \caption{SPE}\label{fig:ntk_our}
    \end{subfigure}
    \caption{Training and test performance of NTK with PE and SPE. Each figure shows the trajectories with different $p$. Dashed lines indicate the trend in theory and solid lines are from experiments. The red vertical line marks the iteration at which PE and SPE achieve equivalent levels of convergence.}\label{fig:ntk}
    \vspace{-1em}
\end{figure}

We finally evaluate SPE on 1D regression with NTK. We adhered to the official implementation of PE-based NTK~\cite {tancik2020fourier} and conducted the experiments accordingly. We modify only by replacing the PE component with SPE. To ensure a fair comparison, we following the same implementaition in \cite{tancik2020fourier}, using Jax \citep{bradbury2018jax} and the same Python library \citep{novak2019neural} to calculate NTK functions automatically.

Figures \ref{fig:ntk} show the NTK regression performance. Here, mappings with smaller $p$ values are associated with NTKs better equipped to capture high-frequency features. We make several observations from this figure. First of all, the training loss plots indicate that, within 50,000 iterations, SPE achieves \textit{~2 $\times$ faster convergence} across all $p$ values compared to PE. Furthermore, the training loss with SPE is notably reduced, being \textit{two orders} of magnitude smaller compared to PE when $p$ equals $0.5$. This significant reduction highlights SPE's enhanced capability in effectively capturing high-frequency features within the training dataset. 

Additionally, in terms of testing loss, while the PE-based NTK regression model requires up to 40,000 iterations to converge, the SPE-based model demonstrates convergence within approximately 20,000 iterations for all evaluated $p$ values with a smaller test loss. 

Finally, the test loss plots for SPE reveal a nuanced trend. While employing NTK regression with PE yields best performance at a $p$ value of 1, as smaller $p$ values tend to cause overfitting, which is discussed in \citet{tancik2020fourier}; SPE demonstrates a marked improvement in this aspect. Specifically, as illustrated in Figure~\ref{fig:ntk}, SPE maintains comparable performance levels even when $p$ is reduced to 0.5. This capacity to effectively handle smaller $p$ values which is intended to capture more high-frequency features substantiates our argument that SPE can adaptively select and converge upon the most pertinent frequency features without the typical drawbacks associated with smaller $p$ values in PE, which is aligned with Theorem \ref{thm:pe-accuracy} in \S\ref{sec:implementation} .
\section{Related Work}\label{relatedworks}
Our study is motivated by the successful application of Fourier features based PE in various tasks for processing visual signals, including images~\citep{stanley2007compositional} and 3D scenes~\citep{mildenhall2021nerf, yang2023freenerf}. Even earlier, there are similar positional encoding methods in natural language processing and time series analysis~\citep{kazemi2019time2vec, vaswani2017attention}. Among these works, \citet{xu2019self} use random Fourier features to approximate stationary kernels with a sinusoidal input mapping and propose techniques to tune the mapping parameters. Then in \citet{tancik2020fourier}, the authors extend Fourier features based PE by comprehending such mapping as a modification of the resulting network's NTK. 

At the same time as \citet{tancik2020fourier}, another work on utilizing sinusoidal activation via SIREN layers to learn from the derivative information~\cite{sitzmann2020implicit} shows another potential of using Fourier representations to help the training process, which is then proved as a structural similar representation as the PE with a trainable parameter~\citep{benbarka2022seeing}.
This provokes us to think about how to use this connection to enable a trainable and adaptive PE. Simultaneously, there are other attempts to learn the Fourier features directly~\citep{gao2023adaptive} or learn a pseudo random encoding of the inputs~\citep{muller2022instant, tancik2023nerfstudio}. However, those works cannot fully address practical concerns, such as why the empirical setting of NeRF should be like in~\citet{mildenhall2021nerf}, for which we see a gap between theory and empirical operation. 

Building on the existing analysis of PE, we introduce SPE to adaptively learn optimal Fourier features, thereby bridging theoretical understanding and practical implementation. This approach not only enhances the effectiveness of PE in capturing high-frequency functions but also broadens the scope of its application across domains.
\section{Conclusions}\label{conclusion}
This paper presents SPE, a new positional encoding method designed to learn adaptive frequency features closely aligned with the true underlying function. Through extensive evaluation across three distinct scenarios -- 1D regression, 2D speech synthesis, and 3D NeRF -- we have demonstrated SPE's superiority over traditional Fourier feature based PE. Our findings indicate that SPE
is effective and efficient at
learning high-frequency functions, underscoring its potential as a versatile tool for a broad set of applications.


\section*{Impact Statement}
This paper presents work whose goal is to advance the field of Deep Learning. There are many potential societal consequences of our work, none which we feel must be specifically highlighted here.
\bibliography{base}
\bibliographystyle{icml2024}

\newpage
\appendix
\onecolumn

\section{Appendix}\label{appendix}

\subsection{Further Analysis of SPE}\label{appendix:SPE}

\subsubsection{Reasons for using sinusoidal activation}\label{otheract}
We provide an extended discussion of why Sinusoidal activation turns out to be a highly effective option for SPE. If we denote the activation function in SPE with $\mathbf{A}(\cdot)$, to effectively approximate a sinusoidal wave, the condition can be formulated as:

\begin{equation}\label{approxtarget2}
    \exists \mathbf{I}(t), \mathbf{S}(t) \to \mathbf{A}(\boldsymbol{\omega}\sin(t))\cdot\mathbf{I}(t) + \mathbf{A}(\boldsymbol{\omega}\cos(t))\cdot \mathbf{S}(t) = \sin(\boldsymbol{\omega}\cdot t) = \sin(\boldsymbol{\omega}\cdot 2^{l}\pi\mathbf{x}),
\end{equation}

the $\boldsymbol{\omega}$ can be any element in $\mathbf{W}_\text{SPE} = [\boldsymbol{\omega}_1, \dots, \boldsymbol{\omega}_{2L}]$, and $l\in\{0,\dots, L-1\}$. Suppose $\mathbf{A}(\cdot)$ is a continuous, differentiable, and periodic function, $\mathbf{A}(\mathbf{x}) = \mathbf{A}(\mathbf{x}+2\pi)$, we have
\begin{equation}
    \lim_{t\rightarrow n\pi} \mathbf{A}(\boldsymbol{\omega}\sin(t))\cdot\mathbf{I}(t) + \mathbf{A}(\boldsymbol{\omega}\cos(t))\cdot \mathbf{S}(t) = \mathbf{A}(\boldsymbol{\omega}\cdot |t-n\pi|)\cdot\mathbf{I}(t) + \mathbf{A}(\boldsymbol{\omega}\cos(t))\cdot \mathbf{S}(t)
\end{equation}

Considering $\cos{n\pi} \in \{-1, 1\}$ in periodic, to make it works, we now investigate other common periodic activation.

Suppose $\mathbf{A}(\cdot)$ is periodic linear, we need to have
\begin{equation}
    \mathbf{f}(t, \boldsymbol{\omega}) = \alpha_0\cdot(\boldsymbol{\omega}\sin(t)\bmod 2\pi)\cdot\mathbf{I}(t) + \alpha_1\cdot(\boldsymbol{\omega}\cos(t)\bmod 2\pi)\cdot \mathbf{S}(t),
\end{equation}

where $\mathbf{A}(\cdot)$ is a local linear function such as periodic linear. $\alpha_0$ and $\alpha_1$ are constant value varies between different activation and the range of input. To make Equation~\ref{approxtarget2} work, $\mathbf{I}(\cdot)$ and $\mathbf{S}(\cdot)$ cannot be $\boldsymbol{\omega}$ agnostic, for instance, we need
\begin{equation}
    \lim_{\boldsymbol{\omega}\sin(t)\rightarrow 2n\pi} \mathbf{f}(t, \boldsymbol{\omega}) = \alpha_1\cdot(\boldsymbol{\omega}\cos(t)\bmod 2\pi)\cdot \mathbf{S}(t) = \sin(\boldsymbol{\omega}\cdot t),
\end{equation}
we also have, when $\boldsymbol{\omega}\sin(t) = 2n\pi$, because $\boldsymbol{\omega}^2\sin(t)^2 + \boldsymbol{\omega}^2\cos(t)^2 = \boldsymbol{\omega}^2$
\begin{equation}
    \boldsymbol{\omega}\cos(t) = \sqrt{\boldsymbol{\omega}^2-(2n\pi)^2}
\end{equation}
Then we get a $\boldsymbol{\omega}$ related form of $\mathbf{S}(t)$
\begin{equation}\label{eqst}
    \mathbf{S}(t) = \frac{\sin(\boldsymbol{\omega}\cdot t)}{\sqrt{\boldsymbol{\omega}^2-(2n\pi)^2} \bmod 2\pi}
\end{equation}
Therefore, we prove that if the periodic activation follows the form in Figure~\ref{fig:sawtooth}, to represent a mono-frequency feature, it requires the following layers to perform the high-frequency behaviour as well. Because of the analysis in \citep{tancik2020fourier} and \S\ref{ntk}, the following layer can hardly learn this function. As a result, this form of activation cannot efficiently represent a single frequency feature and tend to introduce artifacts on the final output.

\begin{figure}[t]
\centering
\includegraphics[width=0.45\textwidth]{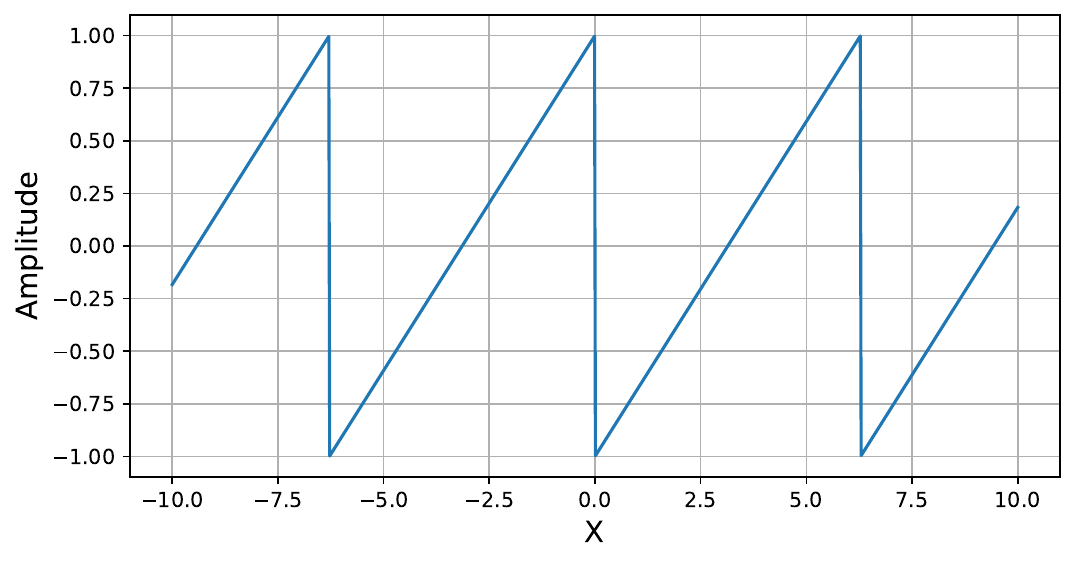} 
\vspace{-1em}
\caption{Saw-tooth Activation is periodic linear and cannot represent sinusoidal wave appropriately.}
\label{fig:sawtooth}
\end{figure}

We can further extend the discussion to any periodic activation function with significant linear behaviour in part of their periodicity, simply by taking out the corresponding period and then we will get $\boldsymbol{\omega}$ related representation of the following layers. \emph{Hence not all periodic activation has the same effectiveness on representing frequency features, if a function has significant linear mapping, then it cannot be a perfect approximation of frequency features as in PE}. 

Another advantage of Sinusoidal activation is the theoretical effectiveness of approximating arbitrary waveform, including the Saw-tooth activation, if desired. According to Fourier transformation, we know that any continuous periodic series can be represented by a linear combination of Sinusoidal waves. Suppose the $L$ is large enough, the other continuous periodic activation is a special case of SPE.

Nevertheless, the Sinusoidal activation used in this paper is not the only solution to make Equation~\ref{approxtarget2} work. Theoretically, for example, it can have low dimension input $\mathbf{x}$ related distortion to the Sinusoidal activation, and then the following layer would have a chance to fix such distortion according to the low dimension input. We choose to use the Sinusoidal activation because it makes the function need to be learned by the following layers to represent a frequency feature simple enough as Equation~\ref{IS0} and Equation~\ref{IS1}.

\subsubsection{Approximation ability}\label{spelearn}
We provide an extended proof which shows SPE can learn an approximation of arbitrary frequency features effectively, \emph{i.e.}, besides SPE, the rest part of MLP is frequency feature agnostic. Considering Equation~\ref{eq:nerf-pe}, we want to prove (let $t = 2^{l}\pi\mathbf{x}, l\in\{0,\dots, L-1\}$)
\begin{equation}\label{approxtarget}
    \exists \mathbf{I}(t), \mathbf{S}(t) \to \sin(\boldsymbol{\omega}\sin(t))\cdot\mathbf{I}(t) + \sin(\boldsymbol{\omega}\cos(t))\cdot \mathbf{S}(t) = \sin(\boldsymbol{\omega}\cdot t) = \sin(\boldsymbol{\omega}
    \cdot 2^{l}\pi\mathbf{x}),
\end{equation}
the $\boldsymbol{\omega}$ can be any element in $\mathbf{W}_\text{SPE} = [\boldsymbol{\omega}_1, \dots, \boldsymbol{\omega}_{2L}]$,
so that it can represent the frequency features without requiring the rest layers to take in the new features but only condition on the input $t$. Assume frequency feature $\boldsymbol{\omega} \in \mathbb{N}^{+}$, when $t\rightarrow n\pi$, if we have $\mathbf{I}(t) = 1$ and $\mathbf{S}(t) = 0$, then the Equation~\ref{approxtarget} can be simplified as following
\begin{equation}\label{approxnpi}
    \lim_{t \rightarrow  n\pi} \sin(\boldsymbol{\omega}\sin(t))
    = \sin(\boldsymbol{\omega}\cdot t)
    = \sin(\boldsymbol{\omega}\cdot 2^{l}\pi\mathbf{x})
\end{equation}
Therefore, whenever $t \rightarrow  n\pi$, $\boldsymbol{\omega}$ is a good approximation of the actual frequency feature. 

Similarly, when $t\rightarrow (n+\frac{1}{2})\pi$, if we have $\mathbf{I}(t) = 0$ and $\mathbf{S}(t) = 1$, Equation~\ref{approxtarget} can be simplified as following
\begin{equation}\label{approxn2pi}
    \lim_{t \rightarrow  n\pi} \sin(\boldsymbol{\omega}\cos(t))
    = \sin(\boldsymbol{\omega}\cdot t)
    = \sin(\boldsymbol{\omega}\cdot 2^{l}\pi\mathbf{x})
\end{equation}

Suppose we follow the conventional PE as the initial part of SPE, the requirement of approximation on $t$ can be transmitted into the following form:

\begin{equation}\label{IS0}
    \text{If:}~2^{l}\pi \mathbf{x}\rightarrow (n+\frac{1}{2})\pi,~\text{Then:}~\mathbf{I}(t) = 1, \mathbf{S}(t) = 0, 
\end{equation}

\begin{equation}\label{IS1}
    \text{If:}~2^{l}\pi \mathbf{x}\rightarrow n\pi,~\text{Then:}~\mathbf{I}(t) = 0, \mathbf{S}(t) = 1
\end{equation}

Therefore, as long as $\exists L\in\mathbb{N}^+ \rightarrow 2^{l}\pi \mathbf{x} \simeq \frac{n\pi}{2}$, the corresponding input value of $\mathbf{x}$ will be approximately represented by frequency $\boldsymbol{\omega}$. To evaluate the effectiveness, we take the highest frequency $l=2^{L-1}$, then we can compute the possible non-zero value $x_i$ in $\mathbf{x}$ can always be represented in the following form
\begin{equation}\label{theox}
    x_i = \frac{n}{2^L} + \epsilon, n \in\mathbb{N}^+, L \in\mathbb{N}^+, \epsilon \rightarrow 0
\end{equation}
We now prove any real $x_i$ can be written in the form of Equation \ref{theox} with a marginal difference. Given arbitrary value of $x_i\in \mathbb{R}$, let $n$ is the nearest integer to $2^Lx_i$, which has $|n - 2^Lx_i| \leq \frac{1}{2}$, then 
\begin{equation}
    \epsilon = x_i -\frac{n}{2^L} < x_i -\frac{2^Lx_i-1}{2^L} = \frac{1}{2^L}
\end{equation}
Therefore the length of PE $L$ determines the resolution of frequency approximation. For the common configuration $L>10$, the error $\epsilon$ is an ignorable small value. Therefore, if write in the vector form of $\mathbf{x}$, we have

\begin{equation}
    \sin(\boldsymbol{\omega}\mathrm{PE}(\mathbf{x})) = \PE(\boldsymbol{\omega}(\mathbf{x}+\frac{1}{2^L}))
\end{equation}

The value $L$ is larger the better, and it should match network size. $L \propto N_{para}$.

We notice that $\mathbf{I}(\cdot)$ and $\mathbf{s}(\cdot)$ is easy to learn because it is just a linear classification to the value of $\mathbf{x}$ itself and totally $\boldsymbol{\omega}$ agnostic. Hence, the representation of SPE on the following layers is almost the same as PE with frequency feature $\boldsymbol{\omega}$, where the error bound of approximation is controlled by $L$. We also prove that with SPE, using larger $L$ is encouraged as it comes with more possible frequency features and lower errors.

Keeping those low-frequency features in SPE is fine because MLP does not suffer from learning low-frequency features, and those parts will be anyway trained well.

\subsubsection{Relation between PE and SPE}\label{spehaspe}
We provide extended proof to show PE is a special case of SPE. We discuss a function $\mathbf{F}(\cdot)$ that maps the SPE components into corresponding high-frequency functions, where we have
\begin{equation}
    \mathbf{F}( \mathbf{x}) =  \mathcal{F}( \mathbf{SPE}(\mathbf{x})),
\end{equation}
where $\mathcal{F}(\cdot)$ is the network after SPE, and by default they are FC layers. With SIREN activation, the SPE can be formulated as follows, for $ l\in\{0,\dots, L-1\}$
\begin{equation}
    \mathbf{SPE}(\mathbf{x}) = \sin(\boldsymbol{\omega}_{\text{SPE}}\cdot\sin(2^{l}\mathbf{x}))
\end{equation}
The original PE plus the first FC layer can be formulated as
\begin{equation}
    \mathbf{PE}(\mathbf{x}) = \boldsymbol{\omega}_{\text{PE}}\cdot \sin(2^{l}\mathbf{x})
\end{equation}
When $\mathbf{x}=0$, $\mathbf{PE}(\mathbf{x}) = \mathbf{SPE}(\mathbf{x}) = 0$. Otherwise, the ratio between $\mathbf{SPE}$ and $\mathbf{PE}$ is
\begin{equation}\label{spepe}
    \frac{\mathbf{SPE}(\mathbf{x})}{\mathbf{PE}(\mathbf{x})} = \frac{\sin(\boldsymbol{\omega}_{\text{SPE}}\cdot\sin(2^{l}\mathbf{x}))}{\boldsymbol{\omega}_{\text{PE}}\cdot\sin(2^{l}\mathbf{x})} (\mathbf{x}\neq 0)
\end{equation}
Essentially, when $\boldsymbol{\omega}_{\text{SPE}}\rightarrow 0$,
because $|\sin(2^{l}\mathbf{x})|\leq 1$, we have 
\begin{equation}  \frac{\sin(\boldsymbol{\omega}_{\text{SPE}}\cdot\sin(2^{l}\mathbf{x}))}{\boldsymbol{\omega}_{\text{SPE}}\cdot\sin(2^{l}\mathbf{x})} = 1
\end{equation}
Therefore, SPE learn an approximation of input PE by learning small weight $\boldsymbol{\omega}_{\text{SPE}}$
\begin{equation}
\lim_{\boldsymbol{\omega}_{\text{SPE}}\rightarrow \epsilon} \frac{\mathbf{SPE}(\mathbf{x})}{\mathbf{PE}(\mathbf{x})}\simeq\frac{\boldsymbol{\omega}_{\text{SPE}}\cdot\sin(2^{l}\mathbf{x})}{\boldsymbol{\omega}_{\text{PE}}\cdot\sin(2^{l}\mathbf{x})} = \frac{\boldsymbol{\omega}_{\text{SPE}}}{\boldsymbol{\omega}_{\text{PE}}}
\end{equation}
By using small weights in SPE~\footnote{A simpler example is, $\lim_{x\rightarrow0} \frac{\sin{5x}}{x} = 5$}, it achieve same performance a PE. Considering there are following layers that can easily scale up the value, we conclude that SPE has a similar behaviour as the original PE with one FC layer. 

\subsection{Further results of NeRF experiments}\label{appendix:EMD}

For the training of SPE in FreeNeRF, we include adversarial training, \emph{a.k.a.}, GAN to minimize the Wasserstein distance between the output and target view. We find that when turning to a new view, the change in the distribution of pixels can represent how different the new view to the previous one. 

We follow a trajectory to observe a chair in the Bender dataset. The closer the number is, the closer the viewpoint is. We can see the process of distribution shifting. From Figure~\ref{fig:hist0} to Figure~\ref{fig:hist24}, we see the Wasserstein distance is a good metric to describe how the pattern in the view is different to the others. Good fidelity means the Wasserstein distance between the training set to the test set should be equal to the distance between the training set and the synthetic views. If the synthetic data has a smaller distance, then it indicates the corresponding method cannot make the output conditional on the input by overfitting the existing distribution.

\begin{figure}[t]
\centering
\includegraphics[width=0.75\textwidth]{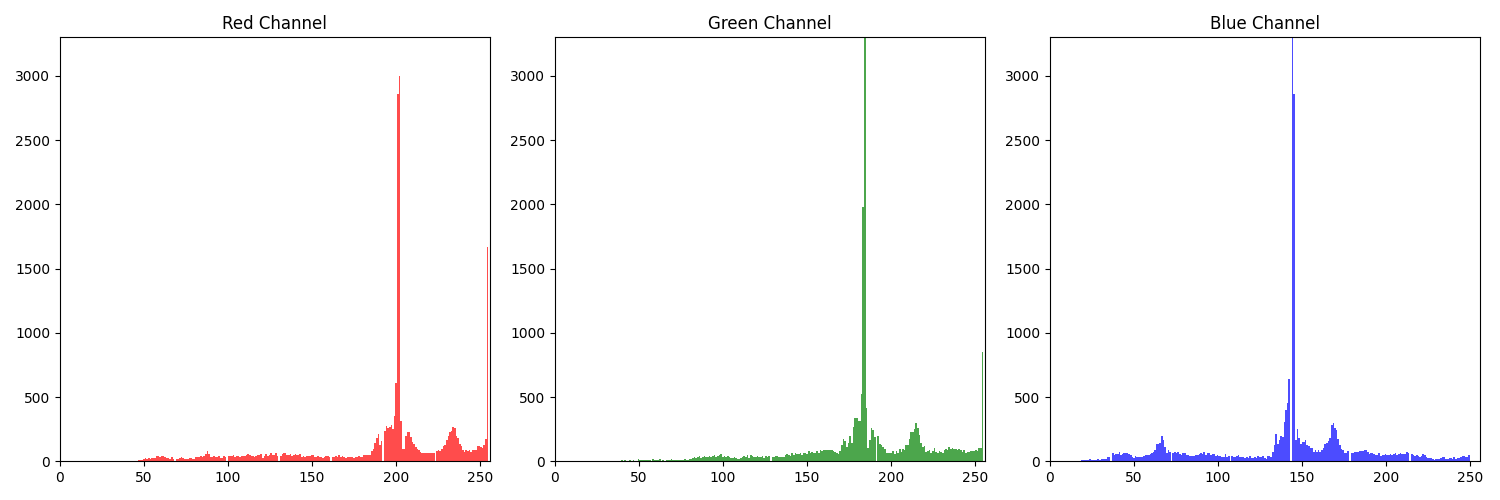} 
\vspace{-1em}
\caption{Histogram of Pixels at view 0.}
\label{fig:hist0}
\end{figure}

\begin{figure}[t]
\centering
\includegraphics[width=0.75\textwidth]{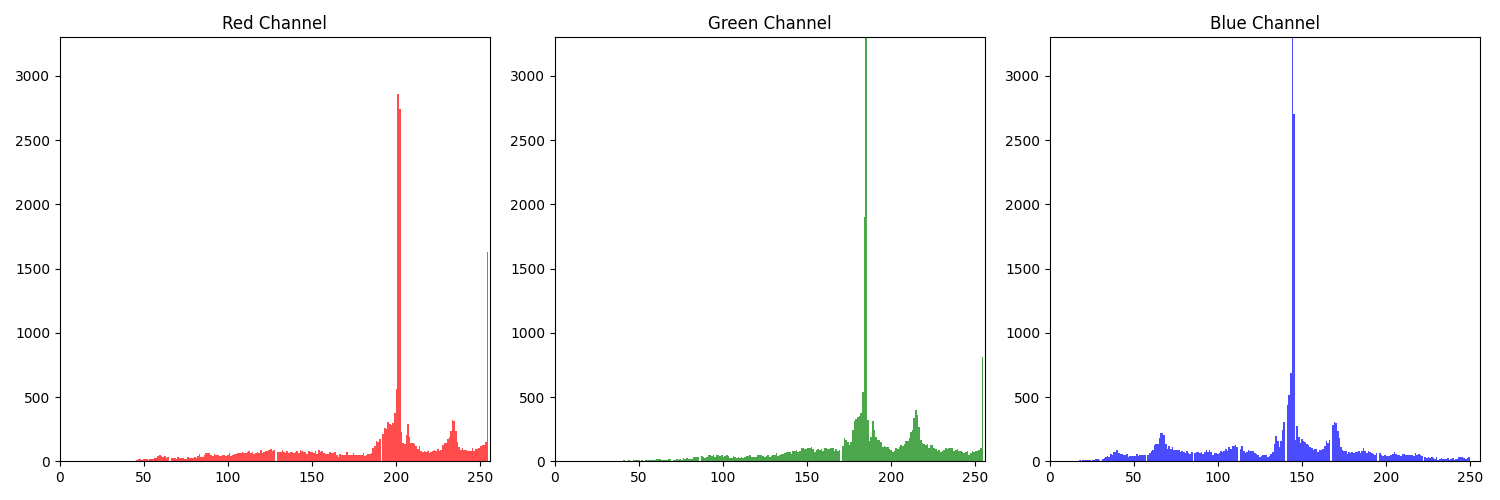} 
\vspace{-1em}
\caption{Histogram of Pixels at view 4.}
\label{fig:hist4}
\end{figure}

\begin{figure}[t]
\centering
\includegraphics[width=0.75\textwidth]{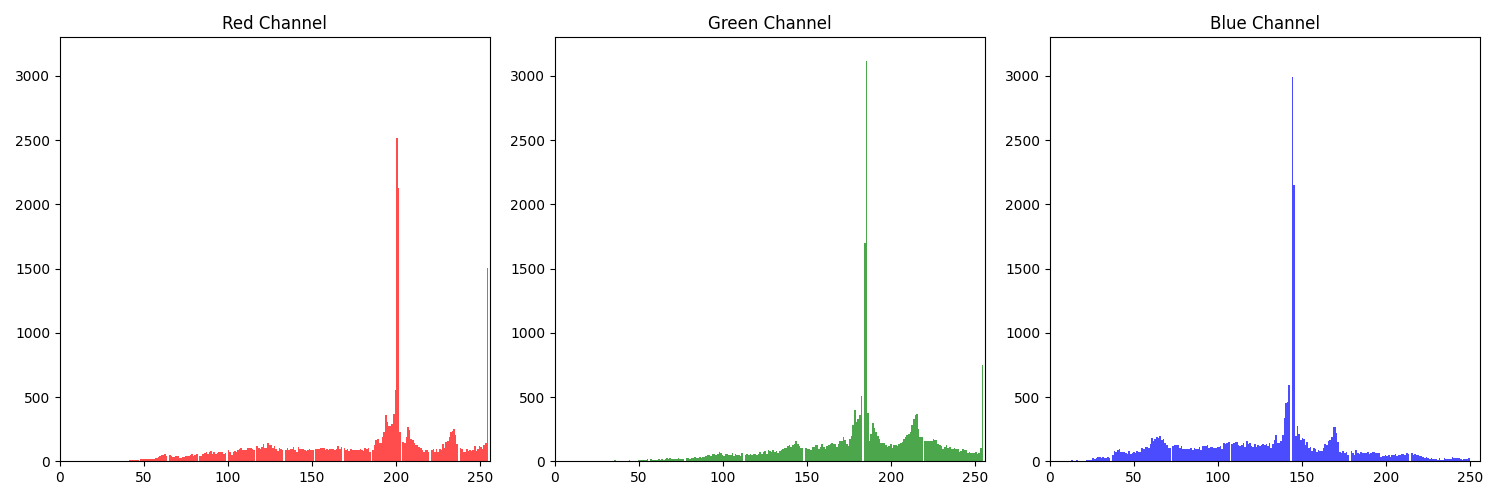} 
\vspace{-1em}
\caption{Histogram of Pixels at view 8.}
\label{fig:hist8}
\end{figure}

\begin{figure}[t]
\centering
\includegraphics[width=0.75\textwidth]{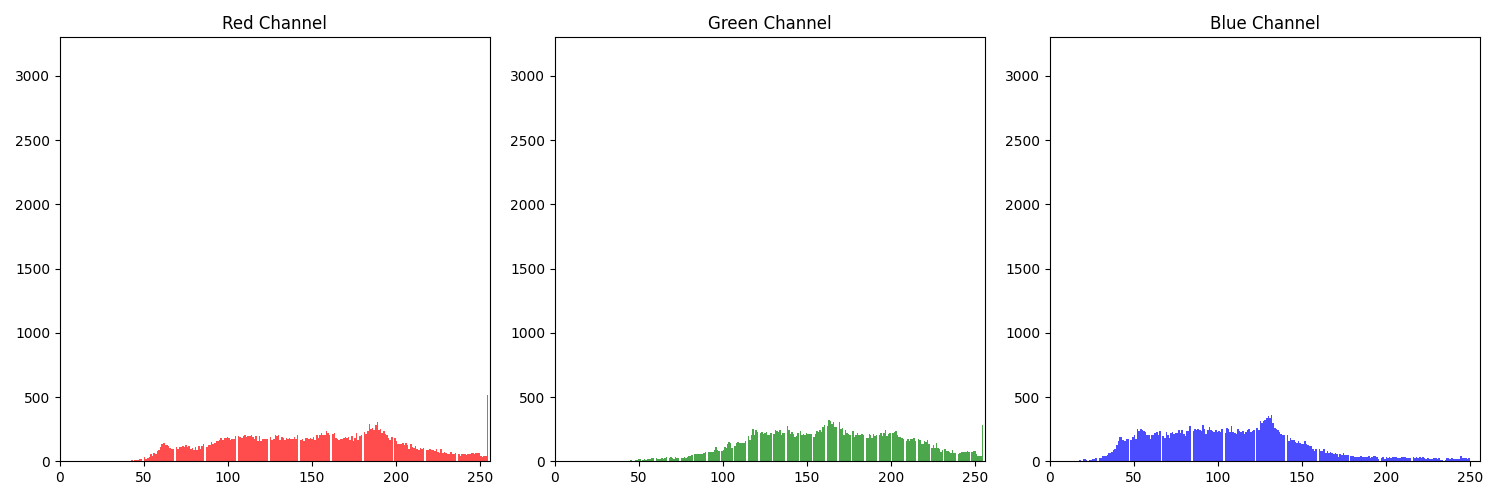} 
\vspace{-1em}
\caption{Histogram of Pixels at view 24.}
\label{fig:hist24}
\end{figure}

This phenomenon is aligned with the fact that pixels will be removed, added, and distorted smoothly when shifting to a new view gradually. However, we find some SOTA methods does not have any gain on learning the right distribution of pixels, even when they have gain on the other metrics such as PSNR and SSIM. Therefore, we make sure the model converges on Wasserstein distance as well to enhance the fidelity.

\subsubsection{Hash encoding and few-view NeRF}
Besides Frequency Features, another encoding method --- Hash Encoding or one-blob encoding, is used in NeRF~\citep{muller2022instant, tancik2020fourier}, and can achieve more accurate results than frequency
encodings in bounded domains at the cost of being single-scale. The one-blob encoding discretizing the input into the bins,  effectively shuts down certain parts of the linear path of the
network, allowing it to specialize the model on various sub-domains of the input. One-blob encoding still uses a stationary kernel without adapting to the actual scene. Also during the discretizing, there is no guarantee it will be done in a way that reflects the high-frequency features. 

Because one-blob encoding uses stationary discretizing and does not learn the frequency feature, the network cannot process the case when the input distribution is far from the training set, where the discretizing leads to a representation that the network cannot generalize. This explains why the one-blob encoding-based method performs much worse on few-view NeRF.

\subsubsection{Training efficiency of SPE}\label{ntk}

Indeed the matrix of $\mathbf{W}_{\text{SPE}}$ is a higher dimension representation than the input sequence. The training of SPE leverages the same mechanism as the original PE, where $\mathbf{W}_{\text{SPE}}$ perform as the first layer of weights as the original PE, therefore the PE part will help the training of $\mathbf{W}_{\text{SPE}}$ as well. 

We conduct the same analysis as \citep{tancik2020fourier, lee2019wide, arora2019fine} with neural tangent kernel (NTK) based neural network approximation. Basically there is not difference when replacing one activation with sinusoidal function, and therefore MLPs with SPE can be trained in the same way just as general MLPs.

\subsubsection{Stationary Kernels}
The significance of stationary kernels, highlighted in \citep{tancik2020fourier}, emphasizes their utility for achieving rotation-invariance and translation-invariance in the Neural Tangent Kernel (NTK). This property is crucial for efficiently training models on diverse data types, including both image (2D/3D) and regression tasks (1D). As demonstrated in Appendix \ref{appendix:SPE}, our SPE method seamlessly aligns with the conventional Positional Encoding (PE) from \citep{tancik2020fourier}, inheriting the advantageous features of stationary kernels, thus offering a robust inductive bias for model training across varied domains.

\section{Supplementary evaluation results} 

\subsection{Configuration details}\label{detailedconfig}
To ensure a fair comparison, we use $L=10$ for the space position input (\emph{i.e.}, the $(x,y,z)$ in Figure~\ref{fig:encoding}), and use $L=4$ for the space position input (\emph{i.e.}, the $\text{dir}$ in Figure~\ref{fig:encoding}), which is the empirical configuration on the Blender dataset. By default, we use the 8-views setup that is aligned with FreeNeRF~\citep{yang2023freenerf} and DietNeRF~\citep{jain2021putting}. We select such a few views setup because if there are too many training views, even the original setup of NeRF can achieve a high fidelity as the new view is similar to the available view. Also when implementing FreeNeRF, we use DietNeRF as the base method of FreeNeRF, which is aligned with the official implementation choice in \citep{yang2023freenerf}.

\subsection{Evaluation of Gaussian Random Fourier Features}\label{eval_grff}
There is another encoding method called Gaussian random Fourier features (GRFF). In GRFF, a pseudo random sequence (still Non-adaptive) is used (\cite{tancik2020fourier} \S 6.1) as Fourier Features. In the SOTA methods, GRFF is not used for two reasons.

\begin{enumerate}
\item Gain is not promising. GRFF assumes Fourier features follow a Gaussian distribution. However, the actual distribution may differ (e.g., Figure~\ref{fig:learnedfeature}), making a Gaussian assumption not universally effective. GRFF's best-case scenario achieves only average performance across known cases, not optimal for specific instances. The FreeNeRF (see Table~\ref{table:GRFF_ablation}) and 1D regression results (see link below) demonstrate that GRFF doesn't significantly outperform PE.

\item Significant computation overhead. To align with the actual distribution of Fourier features in GRFF, an exhaustive search of distribution parameters is required. For instance, the std $\sigma$ for each task and dataset. Given the computational demands, conducting such extensive sweeps for every NeRF task is impractical.
\end{enumerate}

\begin{table}
\centering
\setlength{\tabcolsep}{4pt}
\begin{small}
    \begin{tabular}{lccccc}
    \toprule
    Metric & FreeNeRF  & w/ GRFF ($\sigma=1995$)& w/ SPE  & w/ GAN  & w/ SPE + GAN  \\
    \midrule 
    PSNR $\uparrow$ & 24.259 & 24.382 & 24.951 & 24.531 & 25.202 \\
    SSIM $\uparrow$ & 0.883 &0.886 & 0.898 & 0.889 & 0.910 \\
    \bottomrule
    \end{tabular}%
    \end{small}
\setlength{\tabcolsep}{6pt}
\caption{Evaluation of Gaussian random Fourier features.}
\label{table:GRFF_ablation}
\vspace{-1em}
\end{table}

\subsection{Evaluation of Other Activation Functions with PE}\label{eval_other_func}
The ablation of the choice of a periodic function (based on FreeNeRF) is shown in Table~\ref{table:act_func_ablation}:

\begin{table}
\centering
\setlength{\tabcolsep}{4pt}
\begin{small}
    \begin{tabular}{lcccc}
    \toprule
    Metric & w/ReLU  & w/ Sinusoidal& w/ Sawtooth  & w/ PReLU \\
    \midrule 
    PSNR $\uparrow$ &24.259 &24.951 &23.975 &24.812 \\
    SSIM $\uparrow$ &0.833 &0.898 &0.821 &0.889 \\
    \bottomrule
    \end{tabular}%
    \end{small}
\setlength{\tabcolsep}{6pt}
\caption{Evaluation of Different Activation Functions.}
\label{table:act_func_ablation}
\vspace{-1em}
\end{table}

In this Table~\ref{table:act_func_ablation}, we consider four cases:
\begin{enumerate}
\item ReLU. Basically the original setting of PE.
\item Sinusoidal. The setting of SPE.
\item Saw-tooth functions $f(x)=x-\lfloor{x}\rfloor$ has a bit worse performance to ReLU, perhaps due to the training issue of sawtooth activation.
\item Periodic ReLU (PReLU), $f(x)=\max(0,x)+\sin(x)$. shows a minimal decrease in average performance compared to SPE. The reduction might be caused by the ReLU part as it encourages overfitting to incorrect high-frequency components.
\end{enumerate}

We also tried the case without PE inside, where SPE has a form $\sin(\omega x)$. The result is very blurring with much worse high-frequency features. PSNR is 15.183, and SSIM drops to 0.691.

We will also attend to the format and writing issues in the update version.

\end{document}